\lstdefinestyle{mystyle}{
    commentstyle=\color{OliveGreen},
    numberstyle=\tiny\color{black!60},
    stringstyle=\color{BrickRed},
    basicstyle=\ttfamily\scriptsize,
    breakatwhitespace=false,
    breaklines=true,
    captionpos=b,
    keepspaces=true,
    numbers=none,
    numbersep=5pt,
    showspaces=false,
    showstringspaces=false,
    showtabs=false,
    tabsize=2
}
\newacronym[\glslongpluralkey={Gaussian Processes}]{gp}{gp}{Gaussian Process}
\newacronym{bnn}{bnn}{Bayesian Neural Network}
\newacronym[\glslongpluralkey={Neural Network Gaussian Processes}]{nngp}{nngp}{Neural Network Gaussian Process}
\newacronym{cnn}{cnn}{Convolutional Neural Network}
\newacronym{ntk}{ntk}{Neural Tangent Kernel}
\newacronym[\glslongpluralkey={Sparse Variational Gaussian Processes}]{svgp}{svgp}{Sparse Variational Gaussian Process}
\newacronym[\glslongpluralkey={Sparse Variational Student's $t$ Processes}]{svtp}{svtp}{Sparse Variational Student's $t$ Process}
\newacronym{ood}{ood}{Out-Of-Distribution}
\newcommand{\bv}{\mathbf{v}}
\newcommand{\bx}{\mathbf{x}}
\newcommand{\by}{\mathbf{y}}
\newcommand{\calD}{{\mathcal{D}}}
\newcommand{\calG}{{\mathcal{G}}}
\newcommand{\calH}{{\mathcal{H}}}
\newcommand{\calK}{{\mathcal{K}}}
\newcommand{\calL}{{\mathcal{L}}}
\newcommand{\calN}{{\mathcal{N}}}
\newcommand{\calX}{{\mathcal{X}}}
\newcommand{\calY}{{\mathcal{Y}}}
\newcommand{\bbE}{\mathbb{E}}
\newcommand{\bbN}{\mathbb{N}}
\newcommand{\bbP}{\mathbb{P}}
\newcommand{\bbR}{\mathbb{R}}
\newcommand{\bPhi}{\boldsymbol{\Phi}}
\newcommand{\bPsi}{\boldsymbol{\Psi}}
\newcommand{\bbPsi}{{\overline{\bPsi}}}
\theoremstyle{plain}%
\newtheorem{thm}{Theorem}[section]
\newtheorem{assump}[thm]{Assumption}
\newtheorem{lem}[thm]{Lemma}
\newtheorem{cor}[thm]{Corollary}
\theoremstyle{definition}
\newtheorem{defn}{Definition}[section]
\theoremstyle{remark}
\newcommand{\dee}{\mathrm{d}}
\def\[#1\]{\begin{align}#1\end{align}}
\newcommand{\lyh}[1]{{\color{cyan}{[LHG: #1]}}}
\newcommand{\hy}[1]{{\color{purple}{[YHS: #1]}}}
\newcommand{\commentout}[1]{}
\newcommand{\MVT}{\mathrm{MVT}}
\newcommand{\InvGamma}{\mathrm{InvGamma}}
\newcommand{\train}{\text{tr}}
\newcommand{\test}{\text{te}}
\newcommand{\precalK}{\overline{\calK}}
\title{Scale Mixtures of Neural Network Gaussian Processes}
\author{%
  Hyungi~Lee$^1$, Eunggu~Yun$^1$, Hongseok~Yang$^{1,2,3}$, Juho~Lee$^{1,4}$\\
  $^1$Kim Jaechul Graduate School of AI, KAIST, South Korea\\
  $^2$School of Computing, KAIST, South Korea\\
  $^3$Discrete Mathematics Group, Institute for Basic Science (IBS), Daejeon, South Korea\\
  $^4$AITRICS, Seoul, South Korea\\
  \texttt{\{lhk2708,\,eunggu.yun,\,hongseok.yang,\,juholee\}@kaist.ac.kr}
}
\begin{document}

\maketitle

\begin{abstract}
Recent works have revealed that infinitely-wide feed-forward or recurrent neural networks of any architecture correspond to Gaussian processes referred to as \glspl{nngp}. While these works have extended the class of neural networks converging to Gaussian processes significantly, however, there has been little focus on broadening the class of stochastic processes that such neural networks converge to. In this work, inspired by the scale mixture of Gaussian random variables, we propose the scale mixture of \glspl{nngp} for which we introduce a prior distribution on the scale of the last-layer parameters. We show that simply introducing a scale prior on the last-layer parameters can turn infinitely-wide neural networks of any architecture into a richer class of stochastic processes. With certain scale priors, we obtain heavy-tailed stochastic processes, and in the case of inverse gamma priors, we recover Student’s $t$ processes. We further analyze the distributions of the neural networks initialized with our prior setting and trained with gradient descents and obtain similar results as for \glspl{nngp}. We present a practical posterior-inference algorithm for the scale mixture of \glspl{nngp} and empirically demonstrate its usefulness on regression and classification tasks. In particular, we show that in both tasks, the heavy-tailed stochastic processes obtained from our framework are robust to out-of-distribution data.
\end{abstract}

\section{Introduction}
\label{sec:introduction}

There has been growing interest in the literature on the equivalence between wide deep neural networks and \glspl{gp}.  \citet{neal1996priors} showed that a shallow but infinitely-wide \gls{bnn} with random weights and biases corresponds to a \gls{gp}. This result was extended to fully-connected deep neural networks of any depth~\citep{lee2018deep,matthews2018gaussian}, which are shown to converge to \glspl{gp} as the width grows. Similar results were later obtained for deep \glspl{cnn}~\citep{novak2018bayesian,garriga2019deep} and attention networks~\citep{hron2020infinite}. In fact, \citet{yang2019tensor} showed that wide feed-forward or recurrent neural networks of \emph{any} architecture converge to \glspl{gp} and presented a generic method for computing kernels for such \glspl{gp}. Under this correspondence, the posterior inference of an infinitely-wide \gls{bnn} boils down to the posterior inference of the corresponding \gls{gp} called \gls{nngp} for which a closed-form posterior can  be computed exactly.


Our goal is to advance this line of research 
by going beyond \glspl{gp}. We present a simple yet flexible recipe for constructing infinitely-wide \glspl{bnn} that correspond to a wide range of stochastic processes. Our construction includes \emph{heavy-tailed} stochastic processes such as Student's $t$ processes which have been demonstrated to be more robust than \glspl{gp} under certain scenarios~\citep{shah2014st}.

%
%

Our construction is inspired by a popular class of distributions called \emph{scale mixtures of Gaussians}~\citep{andrews1974scale}; such a distribution is obtained by putting a prior on the scale or variance parameter of a Gaussian distribution. We extend this scale mixing to \glspl{nngp}, where we introduce a prior distribution on the scale of the parameters for the \emph{last layer} (which is often referred to as \emph{readout layer}) in a wide neural network. We show that
simply introducing a scale prior on the last layer can turn infinitely-wide \glspl{bnn} of any architecture into a richer class of stochastic processes, which we name as \emph{scale mixtures of \glspl{nngp}}. These scale mixtures include a broad class of stochastic processes, such as heavy-tailed processes which are shown to be robust to outliers. In particular, when the prior on the scale is inverse gamma, the scale mixture of \glspl{nngp} becomes Stduent's $t$ process~\citep{shah2014st}.

We demonstrate that, despite increasing flexibility, mixing \glspl{nngp} with a prior on the scale parameter does not increase the difficulty of posterior inference much or at all in some cases. When we mix the scale parameter with inverse gamma (so that the mixture becomes a $t$ process), we can compute the kernel efficiently and infer the exact posterior, as in the case of \glspl{nngp}. For generic scale priors and regression tasks with them, we present an efficient approximate posterior-inference algorithm based on importance sampling, which saves computation time by reusing the shared covariance kernels. For classification tasks with categorical likelihood (for which an exact posterior is not available even for the original \gls{nngp}), we present an efficient stochastic variational inference algorithm.



We further analyze the distributions of the neural networks initialized with our prior setting and trained with gradient descents and obtain results similar to the ones for \glspl{nngp}. 
For \gls{nngp}, it has been shown~\citep{matthews2017sample,lee2019wide} that when a wide neural network is initialized with the \gls{nngp} specification and then trained only for the last layer (with all the other layers fixed), its fully trained version becomes a random function drawn from the \gls{nngp} posterior. Similarly, we analyze the distribution of a wide neural network of any architecture initialized with our prior specification and trained only for the last layer, and show that it becomes a sample from a scale mixture of \glspl{nngp} with some scaling distribution. Interestingly, the scaling distribution is a prior, not a posterior. For the fully-connected wide neural networks, we extend the analysis to the case where all the layers are trained with gradient descent, and show that the limiting distribution is again a scale mixture of \glspl{gp} with a prior scaling distribution and each \glspl{gp} using a kernel called \gls{ntk}. In the case of an inverse gamma prior, the limiting distribution becomes Student's $t$ process, which can be computed analytically.

We empirically show the usefulness of our construction on various real-world regression and classification tasks. We demonstrate that, despite the increased flexibility, the scale mixture of \glspl{nngp} is readily applicable to most of the problems where \glspl{nngp} are used, without increasing the difficulty of inference. Moreover, the heavy-tailed processes derived from our construction are shown to be more robust than \glspl{nngp} for out-of-distribution or corrupted data while maintaining similar performance for the normal data. Our empirical analysis suggests that our construction is not merely a theoretical extension of the existing framework, but also provides a practical alternative to \glspl{nngp}.

\subsection{Related Works}
\label{sec:related_works}
\paragraph{\gls{nngp} and \gls{ntk}} Our construction heavily depends on the tensor-program framework~\citep{yang2019tensor} which showed that a wide \gls{bnn} of any feed-forward or recurrent architecture converges in distribution to an \gls{nngp}, as its width increases. 
Especially, we make use of the so-called master theorem and its consequence (reviewed in \cref{sec:related_works}) to derive our results. Building on the seminal work on \gls{ntk}~\citep{jacot2018ntk}, \citet{lee2019wide} analyzed the dynamics of  fully-connected neural networks trained with gradient descent and showed that fully-trained infinitely-wide networks are \glspl{gp}. In particular, they confirmed the result in \citet{matthews2017sample} that when a fully-connected neural network is initialized from a specific prior (often referred to as the \gls{ntk} parameterization) and trained only for the last layer under gradient descent and squared loss, its fully-trained version becomes a posterior sample of the corresponding \gls{nngp}.
\citet{lee2019wide} further analyzed the case where all the layers are trained with gradient descent and showed that the network also converges to a \gls{gp} with specific parameters computed with the so called \gls{ntk} kernel. We extend these results to our scale mixture of \glspl{nngp} in \cref{sec:results}.

\paragraph{Heavy-tailed stochastic processes from infinitely-wide \glspl{bnn}}
The attempts to extend the results on \glspl{nngp} to heavy-tailed 
processes have been made in the past, although not common. The representative case is the work by \citet{favaro2020stable}, which showed that under an alternative prior specification, a wide fully-connected neural network converges to stable processes, as the widths increase. This result was later extended to deep \glspl{cnn} in \citep{bracale2021stablecnn}.
What distinguishes our approach from these works is the simplicity of our construction; it simply puts a prior distribution on the scale of the last-layer parameters of a network and makes the scale a random variable, while the constructions in those works replace priors for entire network parameters from Gaussian to other distributions.
This simplicity has multiple benefits. First, most of the nice properties of \glspl{nngp}, such as easy-to-compute posteriors and the correspondence to gradient descent training, continue to hold in our approach as a version for mixture distributions, while it is at least difficult to find similar adjustments of these results in those works. Second, our approach is applicable to arbitrary architectures, while those works considered only fully-connected networks and \glspl{cnn}.

\section{Preliminaries}
\label{sec:preliminaries}
 
We start with a quick review on a few key concepts used in our results. For $M \in \bbN$, let $[M]$ be the set $\{1,\ldots,M\}$.
Also, write $\bbR_{+}$ for the set $\{x\in\bbR \mid x>0\}$.
 
\subsection{Tensor Programs and Neural Network Gaussian Processes}

Tensor programs \citep{yang2019tensor} are a particular kind of straight-line programs that express computations of neural networks on fixed inputs. In a sense, they are similar to computation graphs used by autodiff packages, such as TensorFlow and PyTorch, but differ from computation graphs in two important aspects. First, a single tensor program can express the computations of infinitely-many neural networks, which share the same architecture but have different widths. This capability comes from the parameter $n$ of the tensor program, which determines the dimensions of vectors and matrices used in the program. The parameter corresponds to the width of the hidden layers of a neural network, so that changing $n$ makes the same tensor program model multiple neural networks of different widths. Second, tensor programs describe stochastic computations, where stochasticity comes from the random initialization of network weights and biases. These two points mean that we can use a single tensor program to model the sequence of \glspl{bnn} of the same architecture but with increasing widths, and understand the limit of the sequence by analyzing the program. The syntax of and other details about tensor programs are given in
\cref{sec:details_tensor_programs}.

We use tensor programs because of the so called master theorem, which provides a method for computing the infinite-width limits of \gls{bnn} sequences expressed by tensor programs. For the precise formulation of the theorem and the method, again see \cref{sec:details_tensor_programs}.

To explain the master theorem more precisely, assume that we are given a family of \glspl{bnn} $\{f_n\}_{n \in N}$ that are indexed by their widths $n$ and have the following form:
\begin{align*}
    f_n(-;\bv_n,\bPsi_n) & : \bbR^I \to \bbR,
    &
    f_n(\bx;\bv_n,\bPsi_n) & =  \frac{1}{\sqrt{n}} \sum_{\alpha \in [n]} \bv_{n,\alpha} \cdot \phi(g_n(\bx;\bPsi_n))_\alpha,
\end{align*}
where $\bv_n \in \bbR^n$ and $\bPsi_n \in \bbR^{n \times P}$ (for some fixed $P$) 
are the parameters of the $n$-th network, the former
being the ones of the readout layer and the latter all the other parameters, $I$ is the dimension of the inputs, $\phi$ is a non-linear activation
function of the network, and $g_n(\bx;\bPsi_n)$ is the output of a penultimate linear layer, which feeds directly to the readout layer after being transformed by the activation function. In order for the computations of these \glspl{bnn} on some inputs to be represented by a tensor program, the components of the networks have to satisfy at least the following two conditions. First, the entries of $\bv_n$ and $\bPsi_n$ are initialized with samples drawn independently from (possibly different) zero-mean Gaussian distributions. For the entries of $\bv_n$, the distributions are the same and have the variance $\sigma_v^2$ for some $\sigma_v > 0$. For the entries of $\bPsi_n$, the distributions may be different, but if we restrict them to those in each column $j$ of $\bPsi_n$, they become the same and have the variance $\sigma_j/n$ for some $\sigma_j > 0$. Second, $\phi(z)$ is \emph{controlled}, i.e., it is bounded by a function $\exp(C|z|^{2-\epsilon}+c)$ for some $C,\epsilon,c > 0$. This property ensures that $\phi(z)$ for any Gaussian variable $z$ has a finite expectation.

Let $\bx_1,\ldots,\bx_M \in \bbR^I$ be $M$ inputs. When the computations of the \glspl{bnn} on these inputs are represented by a single tensor program, the master theorem holds. It says that there is a general method for computing  $\mu \in \bbR^M$ and $\Sigma \in \bbR^{M\times M}$ inductively on the syntax of the program such that $\mu$ and $\Sigma$ characterize the limit of $(g_n(\bx_1;\bPsi_n),\ldots,g_n(\bx_M;\bPsi_n))_{n \in \bbN}$ in the following sense.
\begin{thm}[Master Theorem]\label{thm:master-theorem}
Let $h:\bbR^M \rightarrow\bbR$ be a controlled function. Then, as $n$ tends to $\infty$,
\[
    \frac{1}{n}\sum_{\alpha = 1}^nh\Big(g_n(\bx_1;\bPsi_n)_\alpha,\ldots,g_n(\bx_M;\bPsi_n)_\alpha\Big)
    \xrightarrow{a.s.}
    \bbE_{Z\sim \calN(\mu,\Sigma)}\left[h(Z)\right],
\]
where $\xrightarrow{a.s.}$ refers to almost sure convergence.
\end{thm}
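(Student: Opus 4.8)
The plan is to prove the statement by induction on the syntax of the tensor program representing the computations of $g_n(\bx_1;\bPsi_n),\ldots,g_n(\bx_M;\bPsi_n)$. Such a program builds up a finite collection of $n$-dimensional vectors by repeatedly applying three kinds of operations: initialization from i.i.d.\ Gaussian entries, coordinatewise nonlinearities (Nonlin), and multiplication $y = Wv$ of a vector $v$ by a Gaussian-initialized weight matrix $W$ (MatMul). To each vector $v$ occurring in the program I attach a scalar random variable $Z^v$, intuitively the limiting law of a single coordinate $v_\alpha$, and I build $\mu$ and $\Sigma$ as the mean and covariance of the joint limit $Z=(Z^{g(\bx_1)},\ldots,Z^{g(\bx_M)})$ recursively, in lockstep with the induction. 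The statement I actually carry through the induction is the stronger joint one: for \emph{every} controlled $h$ and every finite tuple of already-constructed vectors, the empirical average $\frac1n\sum_\alpha h(\cdots)_\alpha$ converges almost surely to $\bbE[h(Z)]$ under the corresponding joint Gaussian.

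First I would dispatch the base cases. For a Gaussian-initialized vector the coordinates are i.i.d., so the empirical average of any controlled $h$ converges almost surely to its expectation by the strong law of large numbers, with controlledness of $h$ supplying the integrability against a Gaussian. For the Nonlin step, if $y$ is obtained coordinatewise as $y_\alpha = \psi(v_{1,\alpha},\ldots,v_{k,\alpha})$ with $\psi$ controlled, then feeding $y$ into $h$ amounts to applying to the parents $v_1,\ldots,v_k$ a new test function obtained by precomposing $h$ with $\psi$; this composite is again controlled, so the claim reduces to the induction hypothesis for the parents.

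The hard part, and where I expect the main obstacle to sit, is the MatMul step $y = Wv$. The difficulty is that $W$ may already have been used earlier in the program, possibly also as $W^\top$, so $y$ is correlated with previously built vectors and is \emph{not} a fresh Gaussian. The standard remedy is a Gaussian-conditioning argument: conditioning on the $\sigma$-algebra generated by all earlier applications of $W$ and $W^\top$, the conditional law of $W$ is still Gaussian, and one decomposes $y$ into a deterministic-coefficient combination of the previously computed images of $W$ (the projection onto already-revealed directions) plus an independent fresh Gaussian component orthogonal to them. One must then show that (i) the fresh component concentrates so as to contribute an independent $\calN\big(0,\sigma^2\,\bbE[(Z^v)^2]\big)$ term to $Z^{y}$, and (ii) the coefficients of the correction term converge to the limiting covariances already produced by the induction. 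Proving (i) and (ii) \emph{almost surely} rather than merely in probability, bounding the resulting error terms uniformly in $n$, and checking that controlledness is preserved under each composition, is the technical crux.

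Finally, assembling the three cases shows that the joint empirical distribution of the coordinates of $(g_n(\bx_1;\bPsi_n),\ldots,g_n(\bx_M;\bPsi_n))$ converges almost surely, tested against controlled functions, to $\calN(\mu,\Sigma)$; taking $h$ to be the given test function yields exactly the asserted convergence. I would close by noting that the recursion used to define $\mu$ and $\Sigma$ along the way is precisely the inductive kernel-computation procedure referenced just before the theorem statement.
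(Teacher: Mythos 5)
The paper does not actually prove this statement: \cref{thm:master-theorem} is imported wholesale from \citet{yang2019tensor}, and the appendix (\cref{sec:details_tensor_programs}) merely restates it as \cref{thm:NETSOR} for all G-variables of a program, together with the inductive construction of $(\mu,\Sigma)$ in \cref{defn:meancovariance}. Measured against the cited source rather than the paper, your sketch reconstructs the right architecture: strengthening the claim to a joint statement over every G-variable, inducting on program syntax, dispatching input G-variables by the strong law of large numbers (note the coordinates are i.i.d.\ \emph{across} $\alpha$ but jointly $\calN(\mu^{\text{in}},\Sigma^{\text{in}})$ across variables, which is what your joint formulation needs), absorbing coordinatewise maps into the test function, and handling MatMul by Gaussian conditioning on previously revealed matrix--vector products. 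Your omission of LinComb is harmless, since precomposing a controlled $h$ with a linear map is again controlled and the $\mu,\Sigma$ bookkeeping in \cref{defn:meancovariance} treats it linearly, exactly as your Nonlin reduction would. Your recursion for $Z^v$ also matches the MatMul covariance rule $\Sigma(g^k,g^l)=\sigma_W^2\,\bbE_Z[\phi(Z)\phi'(Z)]$ in that definition.

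That said, as a proof the proposal has a genuine hole exactly where you locate it: items (i) and (ii) in your MatMul case \emph{are} the theorem, and declaring them is not proving them. Almost-sure (rather than in-probability) convergence is obtained in the source by a quantitative argument --- moment bounds on controlled functions of vectors that are conditionally Gaussian up to a vanishing perturbation, combined with a Borel--Cantelli/SLLN-type estimate --- and one must verify that replacing the empirical Gram coefficients in the projection term by their limiting covariances produces errors controlled uniformly enough to survive composition with merely controlled (potentially $\exp(C|z|^{2-\epsilon})$-growing) test functions; none of that machinery appears in the sketch. One further miscalibration: the transposed reuse $W^\top$ you flag as a difficulty is not part of the grammar reviewed in \cref{sec:details_tensor_programs} (and is not needed for the forward-pass convergence results this paper uses); handling transposes requires the later NETSOR$^\top$ extension with additional conditions, so in the present setting you have imported a complication the theorem does not face, while the complication it does face --- same-$W$ reuse, e.g.\ in recurrent architectures --- is the one your conditioning decomposition must be carried out for in full.
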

 
The next corollary describes an important consequence of the theorem.
\begin{cor}\label{cor:standard-nngp-convergence}
As $n$ tends to $\infty$, the joint distribution of $(f_n(\bx_m;\bv_n,\bPsi_n))_{m \in [M]}$ converges weakly to the multivariate
Gaussian distribution with zero mean and the following covariance matrix $\calK \in \bbR^{M \times M}$:
\[
\calK_{(i,j)} = \sigma^2_v \cdot \bbE_{Z \sim \calN(\mu,\Sigma)}\left[\phi(Z_i)\phi(Z_j)\right]
\]
where $Z_i$ and $Z_j$ are the $i$-th and $j$-th components of $Z$.
\end{cor}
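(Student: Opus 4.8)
The plan is to exploit the fact that, once the hidden-layer parameters $\bPsi_n$ are fixed, $f_n$ becomes a linear functional of the i.i.d.\ Gaussian readout weights $\bv_n$, and then to promote the resulting conditional Gaussianity to the unconditional limit using \cref{thm:master-theorem}. First I would condition on $\bPsi_n$. Since the quantities $\phi(g_n(\bx_m;\bPsi_n))_\alpha$ are then deterministic constants and each coordinate
\[
f_n(\bx_m;\bv_n,\bPsi_n) = \frac{1}{\sqrt{n}}\sum_{\alpha=1}^n \bv_{n,\alpha}\,\phi(g_n(\bx_m;\bPsi_n))_\alpha
\]
is a linear combination of the jointly Gaussian, zero-mean entries $\bv_{n,\alpha}$, the conditional law of the vector $\bof_n \defas (f_n(\bx_m;\bv_n,\bPsi_n))_{m\in[M]}$ is \emph{exactly} the multivariate Gaussian $\calN(0,\calK_n)$, where
\[
(\calK_n)_{(i,j)} = \frac{\sigma_v^2}{n}\sum_{\alpha=1}^n \phi(g_n(\bx_i;\bPsi_n))_\alpha\,\phi(g_n(\bx_j;\bPsi_n))_\alpha
\]
is a random matrix measurable with respect to $\bPsi_n$ (indeed $\bof_n = \tfrac{1}{\sqrt n}C\bv_n$ with $C_{m,\alpha}=\phi(g_n(\bx_m;\bPsi_n))_\alpha$, so $\calK_n = \tfrac{\sigma_v^2}{n}CC\tr$).

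The second step is to identify the almost-sure limit of $\calK_n$. Taking $h(z_1,\dots,z_M) = \phi(z_i)\phi(z_j)$ and noting that a product of controlled functions is again controlled---if $|\phi(z)|\le \exp(C|z|^{2-\epsilon}+c)$ then $|\phi(z_i)\phi(z_j)|\le \exp\!\big(C(|z_i|^{2-\epsilon}+|z_j|^{2-\epsilon})+2c\big)\le \exp(2C\|z\|^{2-\epsilon}+2c)$---\cref{thm:master-theorem} applies entrywise and yields
\[
(\calK_n)_{(i,j)} \;\xrightarrow{a.s.}\; \sigma_v^2\,\bbE_{Z\sim\calN(\mu,\Sigma)}\!\left[\phi(Z_i)\phi(Z_j)\right] = \calK_{(i,j)}.
\]
Hence $\calK_n \to \calK$ almost surely as $n\to\infty$.

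Finally I would combine the two facts through characteristic functions. For any $t\in\bbR^M$, the tower property together with the conditional Gaussianity from Step~1 gives
\[
\bbE\!\left[\exp\!\big(i\,t\tr \bof_n\big)\right] = \bbE_{\bPsi_n}\!\left[\exp\!\big(-\tfrac12\, t\tr \calK_n\, t\big)\right].
\]
Because each $\calK_n$ is positive semidefinite, the integrand is bounded in modulus by $1$, and by Step~2 it converges almost surely to $\exp(-\tfrac12 t\tr \calK t)$. Dominated convergence then yields convergence of the characteristic functions to that of $\calN(0,\calK)$, and Lévy's continuity theorem delivers the claimed weak convergence of $\bof_n$ to $\calN(0,\calK)$.

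I expect the only genuinely delicate point to be the passage from conditional to unconditional laws in the last step. One must confirm that the conditional characteristic function is the exact Gaussian form (not an approximation), that $\calK_n$ is $\bPsi_n$-measurable so the tower property is legitimate, and---most importantly---that the almost-sure convergence supplied by the master theorem is enough. It is enough here only because boundedness of the integrand lets dominated convergence override the fact that \cref{thm:master-theorem} gives a.s.\ (rather than, say, $L^1$) convergence of $\calK_n$. Checking that products of the activation remain controlled is routine but worth stating explicitly, since it is precisely the hypothesis that licenses the entrywise application of the master theorem.
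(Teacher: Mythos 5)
Your proof is correct and follows essentially the same route as the paper's (the corollary is proved implicitly via the proof of \cref{thm:convergence-mixture-gps}, which likewise conditions to obtain exact conditional Gaussianity with the empirical covariance $\precalK[n]$, invokes \cref{thm:master-theorem} for its almost-sure entrywise convergence, and passes to the limit by dominated convergence over the conditioning randomness). The only difference is the finishing device: you use conditional characteristic functions and L\'evy's continuity theorem, whereas the paper integrates bounded continuous test functions against the conditional Gaussian law --- your variant is, if anything, slightly more robust, since the bound $\left|\exp\left(-\tfrac{1}{2}t\tr\calK_n t\right)\right|\le 1$ is immediate and one never needs the Gaussian densities $\calN(\,\cdot\,;0,\precalK[n])$ appearing in the paper's dominated-convergence step to exist (i.e., no nondegeneracy of $\precalK[n]$ or $\calK$ is required).
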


The above corollary and the fact that $\calK$ depend only on the shared architecture of the \glspl{bnn},
not on the inputs $\bx_{1:M}$, imply that the \glspl{bnn} converge to a \gls{gp} whose mean is the constant
zero function and whose kernel is 
\[
\kappa(\bx,\bx') = \sigma^2_v \cdot \bbE_{Z \sim \calN(\mu,\Sigma)}\left[\phi(Z_1)\phi(Z_2)\right]
\]
where $(\mu,\Sigma)$ is constructed as described above for the case that $M=2$ and $(\bx_1,\bx_2) = (\bx,\bx')$. This \gls{gp} is called \emph{Neural Network Gaussian Process} (\gls{nngp}) in the literature.

\subsection{\texorpdfstring{Student's $t$ Processes}{Student's t Processes}}

Student's $t$ processes feature prominently in our results to be presented shortly, because they are marginals of the scale mixtures of \glspl{nngp} when the mixing is done by inverse gamma. These processes are defined in terms of multivariate Student's $t$ distributions defined as follows.
\begin{defn}
A distribution on $\bbR^d$ is a \emph{multivariate (Student's) $t$ distribution} with degree of freedom $\nu \in \bbR_{+}$, location $\mu\in\bbR^d$ and positive-definite scale matrix $\Sigma\in\bbR^{d\times d}$ if it has the following density:
\begin{align*}
    p(\bx) = \frac{G((\nu+d)/2)}{(\nu\pi)^\frac{d}{2} \cdot G(\nu/2) \cdot |\Sigma|^{\frac{1}{2}}}\left(1+\frac{1}{\nu}(\bx-\mu)^\top\Sigma^{-1}(\bx-\mu)\right)^{-\frac{\nu+d}{2}}
\end{align*}
where $G(\cdot)$ is the gamma function. To express a random variable drawn from this distribution,
we write $\bx \sim \MVT_d(\nu,\mu,\Sigma)$.
\end{defn}
When $\nu > 1$, the mean of the distribution $\MVT_d(\nu,\mu,\Sigma)$ exists, and it is $\mu$. When $\nu > 2$, the covariance of distribution also exists, and it is $\frac{\nu}{\nu-2}\Sigma$. As for the Gaussian distribution, the multivariate $t$ distribution is also closed under marginalization and conditioning. More importantly, it can be obatined as a scale mixture of Gaussians; if $\sigma^2 \sim \InvGamma(a, b)$ and $\bx|\sigma^2 \sim \calN(\mu, \sigma^2\Sigma)$, then the marginal distribution of $\bx$ is $\MVT(2a, \mu, \frac{b}{a}\Sigma)$. For the definition of the inverse gamma distribution, see \cref{defn:inversegamma}.

Student's $t$ process is a real-valued random function such that 
the outputs of the function on a finite number of inputs are distributed by a multivariate $t$ distribution~\citep{shah2014st}.
Consider a set $\calX$ (for inputs) with a symmetric positive-definite function $\kappa : \calX \times \calX \to \bbR$.
\begin{defn}
A random function $f : \calX \to \bbR$ is  
\emph{Student's $t$ process} with degree of freedom $\nu \in \bbR_{+}$, location function $M:\calX\rightarrow\bbR$, and scale function $\kappa$, if for all $d$ and inputs $x_1,\ldots,x_d\in\calX$, the random vector
$(f(x_1),\ldots,f(x_d))$ has the multivariate $t$ distribution $\MVT_d(\nu,\mu,\Sigma)$,
where $\mu = (M(x_1),\ldots,M(x_d))$ and $\Sigma$ is the $d\times d$ matrix defined by $\Sigma_{(i,j)} = k(x_i,x_j)$.
\end{defn}

\section{Results}
\label{sec:results}

We will present our results for the regression setting. Throughout this section, we consider only those \glspl{bnn} whose computations over fixed inputs are representable by tensor programs. Assume that we are given a training set $\calD_\train = \{(\bx_k,y_k) \in \bbR^I \times \bbR \mid 1 \leq k \leq K\}$ and a test set $\calD_\test = \{(\bx_{K+\ell},y_{K+\ell}) \mid 1 \leq \ell \leq L\}$. 

Our idea is to treat the variance $\sigma_v^2$ for the parameters of the readout layer in a \gls{bnn} as a random variable, not a deterministic value, so that the \gls{bnn} represents a mixture of random functions (where the randomness comes from that of $\sigma_v^2$ and the random initialization of the network parameters). To state this more precisely, consider the computations of the \glspl{bnn} $\{f_n(-;\bv_n,\bPsi_n)\}_{n \in \bbN}$ on the inputs $\bx_{1:K+L}$ in the training and test sets such that $n$ is the width of the hidden layers of $f_n$ and these computations are representable by a tensor program. Our idea is to change the distribution for the initial value of $\bv_n$ from a Gaussian to a scale mixture of Gaussians, so that at initialization, the \glspl{bnn} on $\bx_{1:K+L}$ become the following random variables: for each $n \in \bbN$,
\begin{align*}
& \sigma_v^2 \sim \calH,
\qquad
\bv_{n,\alpha} | \sigma_v^2 \sim \calN(0,\sigma_v^2) \ \, \text{for $\alpha \in [n]$},
\qquad
\bPsi_{n,(\alpha,j)} \sim \calN(0,\sigma_j^2/n) \ \, \text{for $\alpha \in [n]$, $j \in [P]$},
\\
& f_n(\bx_i;\bv_n,\bPsi_n) = 
\frac{1}{\sqrt{n}} \sum_{\alpha \in [n]} \bv_{n,\alpha} \cdot \phi(g_n(\bx_i; \bPsi_{n}))_\alpha
\ \, \text{for $i \in [K+L]$},
\end{align*}
where $\calH$ is a distribution on positive real numbers, such as the inverse-gamma distribution,
$P$ is the number of columns of $\bPsi_n$, and $\sigma_j^2$ and $g_n$ are, respectively, the variance specific to the $j$-th column of $\bPsi_n$ and the output of the penultimate linear layer, as explained in our review on tensor programs.

Our first result says that as the width of the \gls{bnn} grows to $\infty$, the \gls{bnn}
converges in distribution to a mixture of \glspl{nngp}, and, in particular, to Student's $t$ process
if the distribution $\calH$ is inverse gamma.
\begin{restatable}[Convergence]{thm}{convergence}
\label{thm:convergence-mixture-gps}
As $n$ tends to $\infty$, the random variable $(f_n(\bx_i;\bv_n,\bPsi_n))_{i \in [K+L]}$ converges in distribution to the following random variable $(f_\infty(\bx_i))_{i \in [K+L]}$:
\begin{align*}
\sigma_v^2 \sim \calH,
\qquad\qquad\qquad
(f_{\infty}(\bx_i))_{i \in [K+L]} | \sigma_v^2 \sim  \calN(0,\sigma_v^2 \cdot \precalK),
\end{align*}
where $\precalK_{(i,j)} = \bbE_{Z \sim \calN(\mu,\Sigma)}[\phi(Z_i)\phi(Z_j)]\,$ for $i,j \in [K+L]$.
Furthermore, if $\calH$ is the inverse-gamma distribution with shape $a$ and scale $b$, the marginal distribution
of $(f_\infty(\bx_i))_{i \in [K+L]}$ is the following multivariate $t$ distribution:
\begin{align*}
    (f_\infty(\bx_i))_{i \in [K+L]}  \sim \MVT_{K+L}(2a,0,(b/a) \cdot \precalK),
\end{align*}
where $\MVT_d(\nu', \mu', \calK')$ denotes the $d$-dimensional Student's $t$ distribution with degree of freedom $\nu'$, location $\mu'$, and scale matrix $\calK'$. 
\end{restatable}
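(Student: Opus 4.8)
The plan is to reduce the statement to the already-established convergence of the standard \gls{nngp} by conditioning on the random scale $\sigma_v^2$, and then to upgrade the resulting \emph{conditional} convergence to convergence of the mixture itself via characteristic functions. Fix the inputs $\bx_{1:K+L}$ and abbreviate $M = K+L$. Conditioned on the event $\sigma_v^2 = s$ for a fixed $s > 0$, the entries $\bv_{n,\alpha}$ are i.i.d.\ $\calN(0,s)$, so the network is exactly the construction underlying \cref{cor:standard-nngp-convergence} with deterministic readout variance $s$. Hence, for each such $s$, that corollary gives that $(f_n(\bx_i;\bv_n,\bPsi_n))_{i\in[M]} \mid \sigma_v^2 = s$ converges weakly, as $n\to\infty$, to $\calN(0,\, s\cdot\precalK)$, where $\precalK_{(i,j)} = \bbE_{Z\sim\calN(\mu,\Sigma)}[\phi(Z_i)\phi(Z_j)]$. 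Since $\calH$ is supported on $\bbR_{+}$, this holds for $\calH$-almost every $s$.

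To pass from this conditional statement to the unconditional one, I would argue through characteristic functions. For $\bt\in\bbR^{M}$, write $\psi_n(\bt\mid s)$ for the conditional characteristic function of $(f_n(\bx_i))_{i}$ given $\sigma_v^2 = s$. The weak convergence from the previous step is equivalent, by the continuity theorem, to the pointwise limit
\begin{align*}
\psi_n(\bt \mid s) \longrightarrow \exp\Big(-\tfrac{1}{2}\, s\, \bt^\top \precalK\, \bt\Big) \quad \text{as } n\to\infty,
\end{align*}
valid for $\calH$-almost every $s$ and every $\bt$. The unconditional characteristic function of $(f_n(\bx_i))_i$ is the $\calH$-average $\bbE_{\sigma_v^2\sim\calH}[\psi_n(\bt\mid\sigma_v^2)]$. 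Because every characteristic function satisfies $|\psi_n(\bt\mid s)|\le 1$ and $\calH$ is a probability measure, dominated convergence lets me interchange the limit in $n$ with the expectation over $\sigma_v^2$, yielding
\begin{align*}
\bbE\big[\exp(i\,\bt^\top (f_n(\bx_i))_{i\in[M]})\big] \longrightarrow \bbE_{\sigma_v^2 \sim \calH}\Big[\exp\big(-\tfrac{1}{2}\,\sigma_v^2\,\bt^\top \precalK\,\bt\big)\Big].
\end{align*}
The right-hand side is precisely the characteristic function of the scale mixture in which $\sigma_v^2\sim\calH$ and $(f_\infty(\bx_i))_i\mid\sigma_v^2 \sim \calN(0,\sigma_v^2\precalK)$; being a bona fide characteristic function, it is continuous at the origin, so Lévy's continuity theorem delivers the claimed convergence in distribution.

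For the second assertion no further limiting argument is needed. Once the limit law is identified as the scale mixture with $\sigma_v^2\sim\InvGamma(a,b)$ and $(f_\infty(\bx_i))_i\mid\sigma_v^2 \sim \calN(0,\sigma_v^2\precalK)$, the scale-mixture representation of the multivariate $t$ distribution recalled in \cref{sec:preliminaries} (applied with $\mu=0$ and $\Sigma=\precalK$) immediately identifies the marginal as $\MVT_{M}(2a,\,0,\,(b/a)\precalK)$.

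The only genuine obstacle is the interchange of the limit and the outer expectation, that is, the step upgrading conditional weak convergence to convergence of the mixture. I expect this to be the crux, but it is disarmed cleanly by the uniform bound $|\psi_n(\bt\mid s)|\le 1$, which makes dominated convergence applicable with the constant dominating function $1$. Two minor points to confirm in a full write-up are that \cref{cor:standard-nngp-convergence} applies verbatim for each fixed positive scale (it does, since fixing $\sigma_v^2$ recovers the original Gaussian readout prior), and that the $\calH$-null set of scales on which the conditional convergence might fail is irrelevant after integrating against $\calH$.
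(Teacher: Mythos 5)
Your proposal is correct and takes essentially the same route as the paper's proof: condition on $\sigma_v^2$ to reduce to the standard \gls{nngp} limit, interchange the $n$-limit with the mixing expectation over $\calH$ by dominated convergence, and identify the inverse-gamma marginal via the Gaussian scale-mixture representation of the multivariate $t$ distribution (\cref{lem:marginal}). The only difference is packaging: you invoke \cref{cor:standard-nngp-convergence} as a black box for each fixed scale and upgrade through characteristic functions bounded by $1$, whereas the paper works with bounded continuous test functions and inlines the master-theorem argument, conditioning additionally on the penultimate-layer $\sigma$-field $\calG_n$ so that the output is exactly Gaussian at finite $n$ with an empirical kernel converging almost surely to $\sigma_v^2\precalK$.
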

This theorem holds mainly because of the master theorem for tensor programs (Theorem~\ref{thm:master-theorem}). In fact, the way that its proof uses the master theorem is essentially the same as the one of the \gls{nngp} convergence proof (i.e., the proof of Corollary~\ref{cor:standard-nngp-convergence}), except for one thing: before applying the master theorem, the proof of Theorem~\ref{thm:convergence-mixture-gps} conditions on $\sigma_v^2$ and removes its randomness. The detailed proof can be found in the appendix. 

As in the case of Corollary~\ref{cor:standard-nngp-convergence}, although the theorem is stated for the finite dimensional case with the inputs $\bx_{1:K+L}$, it implies the convergence of the \glspl{bnn} with random $\sigma_v^2$ to a scale mixture of \glspl{gp} or to Student's $t$ process.
Note that $\sigma_v^2 \cdot \precalK$ in the theorem is precisely the covariance matrix $\calK$ of the \gls{nngp} kernel on $\bx_{1:K+L}$ in Corollary~\ref{cor:standard-nngp-convergence}. We thus call the limiting stochastic process as \emph{scale mixture of \glspl{nngp}}.

Our next results characterize a fully-trained \gls{bnn} at the infinite-width limit under two different training schemes. They are the versions of the standard results in the literature, generalized from \glspl{gp} to scale mixtures of \glspl{gp}. Assume the \glspl{bnn} under the inputs $\bx_{1:K+L}$ that we have been using so far.

\begin{restatable}[Convergence under Readout-Layer Training]{thm}{lastlayertrain}
\label{thm:lastlayertrain}
Assume that we train only the readout layers of the \glspl{bnn} using the training set $\calD_\train$ under mean squared loss and infinitesimal step size. Formally, this means the network parameters are evolved under the following differential equations:
\begin{align*}
(\bv_n^{(0)},\bPsi_n^{(0)}) & = (\bv_n, \bPsi_n),
& 
\frac{d\bv_n^{(t)}}{dt} & = \sum_{i = 1}^K \Big(y_i - f_n(\bx_i;\bv^{(t)}_n,\bPsi^{(t)}_n)\Big) \frac{2\phi(g_n(\bx_i;\bPsi_n^{(t)}))}{K \sqrt{n}},
&
\frac{d\bPsi_n^{(t)}}{dt} & = 0.
\end{align*}
Then, the time and width limit of the random variables $(f_n(\bx_{K+i};\bv_n^{(t)},\bPsi_n^{(t)}))_{i \in [L]}$, denoted $(f^\infty_\infty(\bx_{K+i}))_{i \in [L]}$, is distributed by the following mixture of Gaussians:
\begin{align*}
    \sigma_v^2 & \sim\calH, 
    &
    (f^\infty_\infty(\bx_{K+i}))_{i \in [L]} | \sigma_v^2 & 
    \sim 
    \calN\Big(
        \Big(\precalK_{\test,\train}\precalK_{\train,\train}^{-1} Y_\train\Big),\,
        \sigma_v^2\Big(\precalK_{\test,\test}-\precalK_{\test,\train}\precalK_{\train,\train}^{-1}\precalK_{\train,\test}
            \Big)\Big),
\end{align*}
where $Y_\train$ consists of the $y$ values in $\calD_\train$ (i.e., $Y_\train = (y_1,\ldots,y_K)$)
and the $\precalK$'s with different subscripts are the restrictions of $\precalK$ in Theorem~\ref{thm:convergence-mixture-gps} 
with training or test inputs as directed by those subscripts. Furthermore, if 
$\calH$ is the inverse-gamma distribution with shape $a$ and scale $b$, the marginal distribution
of $(f^\infty_\infty(\bx_{K+i}))_{i \in [L]}$ is the following multivariate $t$ distribution:
\begin{align*}
    (f^\infty_\infty(\bx_{K+i}))_{i \in [L]}
    & \sim \MVT_L\left(
    2a,\,
    \Big(\precalK_{\test,\train}\precalK_{\train,\train}^{-1} Y_\train\Big),\,
    \frac{b}{a}\Big(\precalK_{\test,\test}-\precalK_{\test,\train}\precalK_{\train,\train}^{-1}\precalK_{\train,\test}
            \Big)\right).
\end{align*}
\end{restatable}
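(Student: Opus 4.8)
The plan is to exploit that, since $\bPsi_n^{(t)}$ is frozen throughout training, the read-out dynamics is just gradient flow for a \emph{linear} least-squares problem and can be solved in closed form. Write $\psi_n(\bx_i)\defas\phi(g_n(\bx_i;\bPsi_n))\in\bbR^n$ for the fixed penultimate features and collect them into $\Phi_n\in\bbR^{K\times n}$ and $\Phi_n^{\test}\in\bbR^{L\times n}$, whose rows are $n^{-1/2}\psi_n(\bx_i)\tr$ over the training and test inputs respectively, so that $f_n(\bx_{1:K};\bv_n)=\Phi_n\bv_n$ and $f_n(\bx_{K+1:K+L};\bv_n)=\Phi_n^{\test}\bv_n$. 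The ODE for $\bv_n^{(t)}$ then becomes $\dot\bv_n^{(t)}=\tfrac{2}{K}\Phi_n\tr(Y_\train-\Phi_n\bv_n^{(t)})$. First I would solve it explicitly: the training residual obeys $r^{(t)}=\exp(-\tfrac{2}{K}\Phi_n\Phi_n\tr\,t)\,r^{(0)}$, so whenever $\Phi_n\Phi_n\tr\succ0$ (which holds almost surely once $n\ge K$, and corresponds to positive-definiteness of $\precalK_{\train,\train}$ in the limit) the $t\to\infty$ limit is
\begin{align*}
\bv_n^{(\infty)}=\bv_n^{(0)}+\Phi_n\tr(\Phi_n\Phi_n\tr)^{-1}\big(Y_\train-\Phi_n\bv_n^{(0)}\big).
\end{align*}

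Substituting into the test read-out and writing $F_n\defas(f^\infty_n(\bx_{K+i}))_{i\in[L]}=\Phi_n^{\test}\bv_n^{(\infty)}$, I would split the result into a data-fit term and a residual term,
\begin{align*}
F_n=\underbrace{(\Phi_n^{\test}\Phi_n\tr)(\Phi_n\Phi_n\tr)^{-1}Y_\train}_{=:\,m_n}+\underbrace{\big(\Phi_n^{\test}-(\Phi_n^{\test}\Phi_n\tr)(\Phi_n\Phi_n\tr)^{-1}\Phi_n\big)}_{=:\,B_n}\bv_n^{(0)}.
\end{align*}
The key structural point is that, \emph{conditionally on} $\bPsi_n$ and $\sigma_v^2$, the matrices $m_n$ and $B_n$ are deterministic while $\bv_n^{(0)}\sim\calN(0,\sigma_v^2\bI_n)$; hence $F_n\mid\bPsi_n,\sigma_v^2\sim\calN(m_n,\sigma_v^2 B_nB_n\tr)$. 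A short computation that collapses the cross terms using the definition of $B_n$ then yields the Schur-complement form $B_nB_n\tr=\Phi_n^{\test}(\Phi_n^{\test})\tr-(\Phi_n^{\test}\Phi_n\tr)(\Phi_n\Phi_n\tr)^{-1}(\Phi_n(\Phi_n^{\test})\tr)$.

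The heart of the argument is the passage to $n\to\infty$. Each entry of the empirical feature-covariance matrices is an average $\tfrac1n\sum_\alpha\phi(g_n(\bx_i))_\alpha\phi(g_n(\bx_j))_\alpha$, which the master theorem (\cref{thm:master-theorem}) sends almost surely to $\precalK_{(i,j)}$; by continuity of matrix inversion this gives $m_n\to\mu_\star\defas\precalK_{\test,\train}\precalK_{\train,\train}^{-1}Y_\train$ and $B_nB_n\tr\to\Sigma_\star\defas\precalK_{\test,\test}-\precalK_{\test,\train}\precalK_{\train,\train}^{-1}\precalK_{\train,\test}$ almost surely. The step I expect to be the most delicate is that $m_n$ and $B_n$ are \emph{not} independent of $\bv_n^{(0)}$ (all depend on $\bPsi_n$), so one cannot simply invoke Slutsky's theorem on independent factors. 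I would instead work at the level of characteristic functions: for $\xi\in\bbR^L$, the conditional characteristic function of $F_n$ given $\bPsi_n,\sigma_v^2$ is $\exp(\mathrm{i}\langle\xi,m_n\rangle-\tfrac12\sigma_v^2\,\xi\tr B_nB_n\tr\xi)$, which is bounded by $1$ in modulus; taking expectation over $\bPsi_n$ and $\sigma_v^2$ and using dominated convergence with the almost-sure limits above gives
\begin{align*}
\bbE\big[e^{\mathrm{i}\langle\xi,F_n\rangle}\big]\;\longrightarrow\;\bbE_{\sigma_v^2\sim\calH}\Big[\exp\big(\mathrm{i}\langle\xi,\mu_\star\rangle-\tfrac12\sigma_v^2\,\xi\tr\Sigma_\star\xi\big)\Big].
\end{align*}
By L\'evy's continuity theorem this identifies the limit of $F_n$ as the claimed scale mixture: $\sigma_v^2\sim\calH$ and $F_\infty\mid\sigma_v^2\sim\calN(\mu_\star,\sigma_v^2\Sigma_\star)$.

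Finally, for the inverse-gamma case I would invoke the scale-mixture identity recalled in \cref{sec:preliminaries}: if $\sigma_v^2\sim\InvGamma(a,b)$ and $F_\infty\mid\sigma_v^2\sim\calN(\mu_\star,\sigma_v^2\Sigma_\star)$, then marginally $F_\infty\sim\MVT_L(2a,\mu_\star,\tfrac{b}{a}\Sigma_\star)$, which is exactly the stated $t$-distribution. I would flag explicitly that the notation $f^\infty_\infty$ encodes taking $t\to\infty$ before $n\to\infty$, and that this ordering is the only place positive-definiteness of $\Phi_n\Phi_n\tr$ (resp.\ $\precalK_{\train,\train}$) is used.
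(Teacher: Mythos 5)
Your proposal is correct and follows essentially the same route as the paper's proof: freeze $\bPsi_n$, solve the linear gradient flow in closed form, condition on the penultimate features and $\sigma_v^2$ so that the test outputs are conditionally Gaussian with empirical-kernel mean and Schur-complement covariance, invoke the master theorem for the almost-sure convergence $\precalK[n]\to\precalK$, pass to the limit by dominated convergence, and finish the inverse-gamma case with the scale-mixture identity (\cref{lem:marginal}). Your only deviation --- using characteristic functions and L\'evy's continuity theorem where the paper tests against arbitrary bounded continuous $h$ --- is cosmetic, since $z\mapsto e^{\mathrm{i}\langle\xi,z\rangle}$ is just a particular bounded continuous test function and the conditioning-plus-DCT structure is identical.
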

One important point about this theorem is that the distribution of $(f^\infty_\infty(\bx_{K+i}))_{i \in [L]}$
is not the posterior of $(f_\infty(\bx_{K+i}))_{i \in [L]}$ in Theorem~\ref{thm:convergence-mixture-gps} under the condition on $Y_\train$. The former uses the prior on $\sigma_v^2$ for mixing the distribution
of $(f_\infty(\bx_{K+i}))_{i \in [L]} \mid (\sigma_v^2,Y_\train)$, while the latter uses the posterior on $\sigma^2_v$ for the same purpose.

Our last result assumes a change in the initialization rule for our models. This is to enable the use of the results in \citet{lee2019wide} about the training of infinitely-wide neural networks. Concretely, we use the so called NTK parametrization for $\bPsi_n$. This means that the \gls{bnn}s on $\bx_{1:K+L}$ are now intialized to be the following random variables: for each $n\in\bbN$,

\begin{align*}
    & \sigma_v^2 \sim \calH, \quad  \bv_{n,\alpha} | \sigma_v^2 \sim \calN(0,\sigma_v^2),
    \quad\ \, \bPsi_{n,(\alpha,j)} \sim \calN(0,\sigma_j^2)
    \quad \text{for $\alpha \in [n]$, $j \in [P]$},
    \\
    & f_n(\bx_i;\bv_n,\frac{1}{\sqrt{n}}\bPsi_n) = \frac{1}{\sqrt{n}} \sum_{\alpha \in [n]} \bv_{n,\alpha} \cdot \phi(g_n(\bx_i; \frac{1}{\sqrt{n}}\bPsi_{n}))_\alpha
    \ \, \text{for $i \in [K+L]$}.
\end{align*}

When we adopt this NTK parameterization and the \glspl{bnn} are multi-layer perceptrons (more generally, they share an architecture for which the limit theory of neural tangent kernels has been developed), we can analyze their distributions after the training of all parameters. Our analysis uses the following famous result for such \glspl{bnn}: as $n$ tends to infinity,
for all $i,j \in [K+L]$, 
\[
\Big\langle 
\nabla_{(\bv_n,\bPsi_n)} f_n(\bx_i;\bv_n,\frac{1}{\sqrt{n}}\bPsi_n),\, 
\nabla_{(\bv_n,\bPsi_n)} f_n(\bx_j;\bv_n,\frac{1}{\sqrt{n}}\bPsi_n)
\Big\rangle
\xrightarrow{a.s.}
\Theta_{(i,j)}    
\]
for some $\Theta_{(i,j)}$ determined by the architecture of the \glspl{bnn}. Let $\Theta$ be
the matrix $(\Theta_{(i,j)})_{i,j \in [K+L]}$. The next theorem describes the outcome of our analysis
for the fully-trained infinite-width \gls{bnn}.

\begin{restatable}[Convergence under General Training]{thm}{generaltrain}
\label{thm:fcn-general-training}
Assume that the \glspl{bnn} are multi-layer perceptrons and we train all of their parameters using the training set $\calD_\train$ under mean squared loss and infinitesimal step size. Formally, this means the network parameters are evolved under the following differential equations:
\begin{align*}
\bv_n^{(0)} & = \bv_n,
& 
\frac{d\bv_n^{(t)}}{dt} & = \frac{2}{K \sqrt{n}} \sum_{i = 1}^K \Big(y_i - f_n(\bx_i;\bv^{(t)}_n,\frac{1}{\sqrt{n}}\bPsi^{(t)}_n)\Big) \phi(g_n(\bx_i;\frac{1}{\sqrt{n}}\bPsi_n^{(t)})),
\\
\bPsi_n^{(0)} & = \bPsi_n,
&
\frac{d\bPsi_n^{(t)}}{dt} & = \frac{2}{K} \sum_{i = 1}^K \Big(y_i - f_n(\bx_i;\bv^{(t)}_n,\frac{1}{\sqrt{n}}\bPsi^{(t)}_n)\Big) \nabla_{\bPsi}f_n(\bx_i;\bv^{(t)}_n,\frac{1}{\sqrt{n}}\bPsi^{(t)}_n).
\end{align*}
Let $\bar{\Theta}$ be $\Theta$ with $\sigma^2_v$ in it set to $1$ (so that $\Theta = \sigma^2_v \bar{\Theta}$).
Then, the time and width limit of the random variables $(f_n(\bx_{K+i};\bv_n^{(t)},\frac{1}{\sqrt{n}}\bPsi_n^{(t)}))_{i \in [L]}$ has the following distribution:
\begin{align*}
    \sigma_v^2 & \sim\calH, 
    &
    (f_\infty^\infty(\bx_{K+i}))_{i \in [L]} | \sigma_v^2 & 
    \sim 
    \calN(\mu', \sigma_v^2\Theta')
\end{align*}
where
\begin{align*}
    \mu' &= \bar{\Theta}_{\test,\train}\bar{\Theta}_{\train,\train}^{-1} Y_\train,
    \\
    \Theta' & =\precalK_{\test,\test}
    +\Big(\bar{\Theta}_{\test,\train}\bar{\Theta}_{\train,\train}^{-1}\precalK_{\train,\train}\bar{\Theta}_{\train,\train}^{-1}\bar{\Theta}_{\train,\test}\Big)
    -\Big(\bar{\Theta}_{\test,\train}\bar{\Theta}_{\train,\train}^{-1}\precalK_{\train,\test}+
        \precalK_{\test,\train}\bar{\Theta}_{\train,\train}^{-1}\bar{\Theta}_{\train,\test}\Big).
\end{align*}
Here the $\precalK$'s with subscripts are the ones in \cref{thm:convergence-mixture-gps} and 
the $\bar{\Theta}$'s with subscripts are similar restrictions of $\bar{\Theta}$
with training or test inputs. 
Furthermore, if $\calH$ is the inverse gamma with shape $a$ and scale $b$, 
the exact marginal distribution of $(f_\infty^\infty(\bx_{K+i}))_{i \in [L]}$ is the following multivariate $t$ distribution:
\[
 (f_\infty^\infty(\bx_{K+i}))_{i \in [L]} \sim  \MVT_L(2a, \mu', (b/a)\cdot \Theta').
\]
\end{restatable}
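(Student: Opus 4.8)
The plan is to reduce the general-training claim to the deterministic-variance NTK theory by conditioning on $\sigma_v^2$, mirroring the device used in the proof of \cref{thm:convergence-mixture-gps}. Once $\sigma_v^2$ is frozen, the initialization is an ordinary NTK-parametrized wide MLP with readout variance $\sigma_v^2$, so the gradient-flow limit theory of \citet{lee2019wide} applies directly. I would begin by recording the two scaling identities that drive the whole argument: the initialization (NNGP) covariance is $\calK = \sigma_v^2\precalK$ by \cref{thm:convergence-mixture-gps}, and the empirical tangent kernel converges, conditionally on $\sigma_v^2$, to $\Theta = \sigma_v^2\bar\Theta$ as recorded in the statement.

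Conditioned on $\sigma_v^2$, I would then invoke the joint width-and-time limit of \citet{lee2019wide}: in the infinite-width regime the parameters remain lazy, the tangent kernel is frozen at its initial value $\Theta$, and the function-space dynamics linearize to $\frac{d}{dt}f_t(\cdot) = -\frac{2}{K}\Theta(\cdot,\bx_{\train})\big(f_t(\bx_{\train})-Y_{\train}\big)$. Solving this linear ODE and letting $t\to\infty$ gives, on the test inputs, $f_\infty^\infty(\bx_{\test}) = f_0(\bx_{\test}) - A\big(f_0(\bx_{\train})-Y_{\train}\big)$ with $A := \Theta_{\test,\train}\Theta_{\train,\train}^{-1}$, where $f_0$ is the initial network output, which (given $\sigma_v^2$) is the zero-mean NNGP Gaussian with covariance $\calK$. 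Being an affine image of a Gaussian, $f_\infty^\infty(\bx_{\test})$ is conditionally Gaussian with mean $A\,Y_{\train}$ and covariance $\calK_{\test,\test} - A\calK_{\train,\test} - \calK_{\test,\train}A^\top + A\calK_{\train,\train}A^\top$.

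Substituting the two scaling identities finishes the conditional computation. Because $A = \sigma_v^2\bar\Theta_{\test,\train}(\sigma_v^2\bar\Theta_{\train,\train})^{-1} = \bar\Theta_{\test,\train}\bar\Theta_{\train,\train}^{-1}$, the factor $\sigma_v^2$ cancels and the mean collapses to the scale-free $\mu'$; replacing every $\calK$ by $\sigma_v^2\precalK$ (the $A$'s being already scale-free) pulls out a single overall $\sigma_v^2$, so the covariance becomes $\sigma_v^2\Theta'$ with $\Theta'$ exactly as displayed. This yields the conditional law $\calN(\mu',\sigma_v^2\Theta')$. Integrating over $\sigma_v^2\sim\calH$ gives the stated scale mixture, and for $\calH=\InvGamma(a,b)$ the Gaussian scale-mixture characterization of the multivariate $t$ distribution recalled in \cref{sec:preliminaries} (applied with location $\mu'$ and scale $\Theta'$) produces the marginal $\MVT_L(2a,\mu',(b/a)\Theta')$.

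The main obstacle is the rigorous justification of the conditional reduction rather than the linear algebra: one must argue that conditioning on $\sigma_v^2$, applying the deterministic-$\sigma_v^2$ double limit of \citet{lee2019wide}, and then integrating over $\calH$ is legitimate, i.e. that the $n\to\infty$ (and $t\to\infty$) limits commute with the mixing and that the almost-sure tangent-kernel convergence holds for $\calH$-almost every $\sigma_v^2$ with the homogeneous scaling $\Theta=\sigma_v^2\bar\Theta$. Verifying this scaling, by tracking how the $\sigma_v^2$-dependence of the readout weights propagates through each layer's contribution to $\langle\nabla f,\nabla f\rangle$, is the delicate step; once it is in place, the rest is the bookkeeping above together with the conditioning argument already used for \cref{thm:convergence-mixture-gps}.
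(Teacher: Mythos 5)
Your proposal is correct and follows essentially the same route as the paper's proof: condition on $\sigma_v^2$, invoke the deterministic-variance result of \citet{lee2019wide} (which the paper packages as a restated theorem rather than re-deriving the linearized dynamics as you do), use the scalings $\calK = \sigma_v^2\precalK$ and $\Theta = \sigma_v^2\bar\Theta$ to cancel $\sigma_v^2$ from the mean and factor it out of the covariance, de-condition, and apply the inverse-gamma scale-mixture characterization of the multivariate $t$. The interchange issue you correctly flag as the delicate step is exactly what the paper isolates in \cref{lem:convergeindistribution}, disposed of by the tower law and dominated convergence over bounded continuous test functions, just as in your suggested conditioning argument.
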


\begin{restatable}[Unifying High-level Principles]{rem}{unifyingresult}
\label{thm:unifyingresult}
All of our results and their proofs are derived from two high-level principles. First, once we condition on the variance $\sigma^2_v$ of the readout layer's parameters, the networks in our setup fit into the standard setting for \glspl{nngp} and \glspl{ntk}, so that they satisfy the existing results in the setting. This conditioning trick means that most of the results in the standard setting can be carried over to our setup in a scale-mixture form, as we explained in this section. Second, if the prior on $\sigma^2_v$ is inverse gamma, the marginalization of $\sigma^2_v$ in those results can be calculated analytically and have a form of Student's $t$ distribution or process. 
\end{restatable}


\subsection{Posterior inference}\label{subsec:efficient}
\paragraph{Gaussian likelihood}
 As for the \glspl{nngp}, when the scale is mixed by the inverse gamma prior, we can exactly compute the posterior predictives of the scale mixture of \glspl{nngp} for Gaussian likelihood since it corresponds to the Student's $t$ process having closed-form characterizations for the posteriors. For generic prior settings for the scales, we no longer have such closed-form formulas for predictive posteriors, and have to rely on approximate inference methods. We describe one such method, namely, self-normalizing importance sampling with prior as proposal, together with a technique for optimizing the method specifically for scale mixtures of \glspl{nngp}.
 

Consider a scale mixture of \glspl{nngp} with a prior $\calH$ on the variance $\sigma_v^2$. Assume that we want
to estimate the expectation of $h(y)$ for some $h : \bbR \to \bbR$ where the random variable $y$ is drawn
from the predictive posterior of the mixture at some input $\bx \in \bbR^I$ under the condition on
$\calD_\train$. The 
importance sampling with prior as proposal computes the estimator $(\sum_{i=1}^N w_i h(y_i))/\sum_{j=1}^N w_j$ where $\beta_i$ and $y_i$ for $i = 1,\ldots,N$ are independent samples from the prior $\calH$ and the posterior of the Gaussian distribution, respectively, and the $w_i$'s are the importance weights of the $\beta_i$'s:
\begin{align*}
    \beta_i & \sim \calH,
    &
    w_i & = \calN(Y_\train ; 0, \beta_i \precalK_{\train,\train}),
    & 
    y_i & \sim \calN(\precalK_{\bx,\train} \precalK_{\train,\train}^{-1} Y_\train,\, \beta_i(\precalK_{\bx,\bx} - \precalK_{\bx,\train}\precalK_{\train,\train}^{-1}\precalK_{\train,\bx})).
\end{align*}
The $\precalK$ is the covariance matrix computed as in Theorem~\ref{thm:convergence-mixture-gps} except that $\bx$ is now a test input.

A naive implementation of this importance sampler is slow unnecessarily due to
the inefficient calculation of the likelihoods $\calN(Y_\train ;0, \beta_i\precalK_{\train,\train})$ of the sampled $\beta_i$'s. To see this, note that the log likelihood of $\beta_i$ can be decomposed as follows: 
\begin{align*}
    \log\calN(Y_\train; 0, \beta_i\precalK_{\train,\train})
    &=-\frac{K}{2}\log(2\pi)-\frac{1}{2}\log\det\left(\precalK_{\train,\train}\right)-\frac{K}{2}\log(\beta_i)-\frac{1}{2\beta_i}Y_\train^\top\precalK_{\train,\train}^{-1}Y_\train.
\end{align*}
The terms $\frac{K}{2}\log(2\pi)$, $\frac{1}{2}\log\det (\precalK_{\train,\train})$, and $Y_\train^\top\precalK_{\train,\train}^{-1}Y_\train$ are shared across all the samples $\{\beta_i\}_{i=1}^N$, so need not be computed everytime we draw $\beta_i$. To avoid these duplicated calculations, we compute these shared terms beforehand, so that the log-likleihood for each $\beta_i$ can be computed in $O(1)$ time. 



\paragraph{Generic likelihoods}
For generic likelihoods other than Gaussian, even the vanilla \glspl{nngp} do not admit closed-form posterior predictives.  In such cases, we can employ the \gls{svgp}~\citep{titsias2009svgp} for approximate inference. We present a similar algorithm for the inverse-gamma mixing case which leads to Student's $t$ process. See
\cref{sec:svgp,sec:svtp} for the detailed description.

\section{Experiments}
\label{sec:experiments}
We empirically evaluated the scale mixtures of \glspl{nngp} on various synthetic and real-world tasks.
We tested the scale mixture of \glspl{nngp} with inverse-gamma prior corresponding to Student's $t$ processes, and with another heavy-tailed prior called Burr Type XII distribution~(\cref{sec:additional_experiments}). Our implementation used Neural Tangents library~\citep{neuraltangents2020} and JAX~\citep{jax2018github}. 

\subsection{Empirical validation of the theories}
\label{subsec:exp_theory}
\begin{figure}[t]
    \centering
    \includegraphics[width=\linewidth]{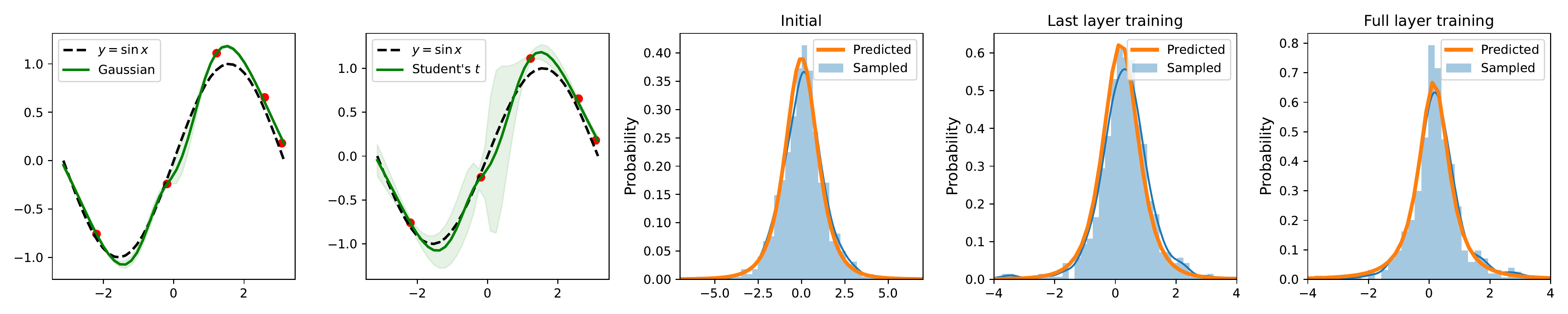}
    \caption{ (1st-2nd column) Posterior predictives obtained from \gls{nngp} (left) and scale mixture of \glspl{nngp} with inverse gamma prior (right). (3rd-5th column) correspondence between wide finite model vs theoretical limit.
    }
    \label{fig:sinx.pdf}
\end{figure}
To empirically validate our theories, we set up a fully connected network having three layers with 512 hidden units and $\mathrm{erf}$ activation function. Except for the last layer, the weights of the network were initialized with $\calN(0, 8/n)$ and the biases were initialized with $\calN(0, 0.05^2/n)$. For the last layer, we sampled $\sigma_v^2 \sim \InvGamma(a,b)$ with varying $(a, b)$ values and sampled weights from $\calN(0, \sigma_v^2)$. To check \cref{thm:convergence-mixture-gps}, we initialize 1,000 models and computed the distribution of outputs evaluated at zero. For \cref{thm:lastlayertrain}, using $y=\sin(x)$ as the target function, we first initialized the parameters, trained only the last layer for multiple $\sigma_v$ values, and averaged the results to get function value at zero. Similarly, we trained all the layers to check \cref{thm:fcn-general-training}. As shown in \cref{fig:sinx.pdf}, the theoretical limit well-matched the empirically obtained distributions for all settings we tested. See \cref{sec:experimental_details} for the details and more results.

\subsection{Regression with Gaussian likelihoods}
\label{subsec:regression}
We tested the scale mixtures of \glspl{nngp} and other models on various real-world regression tasks. All the models are trained with the squared loss function.  We expect the scale mixtures of \glspl{nngp} to be better calibrated than \gls{nngp} due to their heavy-tail behaviors.  To see this, we fitted the scale mixture of \glspl{nngp} with inverse gamma prior on eight datasets collected from UCI repositories~\footnote{\url{https://archive.ics.uci.edu/ml/datasets.php}} and measured the negative log-likelihood values on the test set. \cref{tab:table_regression} summarizes the result. 
The results other than ours are borrowed from \citet{adlam2020exploring}.  In summary, ours generally performed similarly to \gls{nngp} except for some datasets for which ours significantly outperformed \gls{nngp}. Considering the fact that Student's $t$ processes include \glspl{gp} as limiting cases, this result suggests that one can use the scale mixtures as an alternative to \glspl{nngp} even for the datasets not necessarily including heavy-tailed noises. See \cref{sec:experimental_details} for detailed settings for the training.

\begin{table}[t]
    \caption{NLL values on UCI dataset. $(m,d)$ denotes number of data points and features, respectively. We take results from \citet{adlam2020exploring} except our model.}
    \label{tab:table_regression}
    \centering
    \scriptsize
    \resizebox{\textwidth}{!}{
    \begin{tabular}{lrrrrrrr}
        \toprule
        Dataset             & ($    m$, $ d$) & PBP-MV                    & Dropout                   & Ensembles        & RBF                        & NNGP                       & Ours     \\
        \midrule
        Boston Housing      & ($  506$, $13$) & $         2.54 \pm{0.08}$ & $\textbf{ 2.40}\pm{0.04}$ & $ 2.41\pm{0.25}$ & $          2.63 \pm{0.09}$ & $          2.65 \pm{0.13}$ & $         2.72 \pm{0.05}$ \\
        Concrete Strength   & ($ 1030$, $ 8$) & $         3.04 \pm{0.03}$ & $\textbf{ 2.93}\pm{0.02}$ & $ 3.06\pm{0.18}$ & $          3.52 \pm{0.11}$ & $          3.19 \pm{0.05}$ & $         3.13 \pm{0.04}$ \\
        Energy Efficiency   & ($  768$, $ 8$) & $         1.01 \pm{0.01}$ & $         1.21 \pm{0.01}$ & $ 1.38\pm{0.22}$ & $          0.78 \pm{0.06}$ & $          1.01 \pm{0.04}$ & $\textbf{ 0.67}\pm{0.04}$ \\
        Kin8nm              & ($ 8192$, $ 8$) & $\textbf{-1.28}\pm{0.01}$ & $        -1.14 \pm{0.01}$ & $-1.20\pm{0.02}$ & $         -1.11 \pm{0.01}$ & $         -1.15 \pm{0.01}$ & $        -1.18 \pm{0.01}$ \\
        Naval Propulsion    & ($11934$, $16$) & $        -4.85 \pm{0.06}$ & $        -4.45 \pm{0.00}$ & $-5.63\pm{0.05}$ & $\textbf{-10.07}\pm{0.01}$ & $        -10.01 \pm{0.01}$ & $        -8.04 \pm{0.04}$ \\
        Power Plant         & ($ 9568$, $ 4$) & $         2.78 \pm{0.01}$ & $         2.80 \pm{0.01}$ & $ 2.79\pm{0.04}$ & $          2.94 \pm{0.01}$ & $          2.77 \pm{0.02}$ & $\textbf{ 2.66}\pm{0.01}$ \\
        Wine Quality Red    & ($ 1588$, $11$) & $         0.97 \pm{0.01}$ & $         0.93 \pm{0.01}$ & $ 0.94\pm{0.12}$ & $         -0.78 \pm{0.07}$ & $\textbf{ -0.98}\pm{0.06}$ & $        -0.77 \pm{0.07}$ \\
        Yacht Hydrodynamics & ($  308$, $ 6$) & $         1.64 \pm{0.02}$ & $         1.25 \pm{0.01}$ & $ 1.18\pm{0.21}$ & $          0.49 \pm{0.06}$ & $          1.07 \pm{0.27}$ & $\textbf{ 0.17}\pm{0.25}$ \\
        \bottomrule
    \end{tabular}}
\end{table}

\subsection{Classification with Gaussian likelihood}
Following the literature, we apply the \glspl{nngp} and the scale mixtures of \glspl{nngp} to classification problems with Gaussian likelihoods (squared loss). We summarize the results in \cref{sec:classification_with_gaussian_likelihood}.
As a brief summary, ours with heavy-tailed priors (inverse gamma, Burr Type XII) outperformed \gls{nngp} in terms of uncertainty calibration for various datasets including corrupted datasets.

\subsection{Classification with categorical likelihood}
\label{subsec:classification_sv}
We compared the \gls{nngp} and the scale mixture of \glspl{nngp} with inverse-gamma priors on the image classification task. Following the standard setting in image classification with deep neural networks, we computed the posterior predictive distributions of the \glspl{bnn} under categorical likelihood. Since both \glspl{nngp} and the scale mixtures of \glspl{nngp} do not admit closed-form posteriors, we employed \gls{svgp} as an approximate inference method for \gls{nngp} (\cref{sec:svgp}), and extended the \gls{svgp} for the scale mixture of \glspl{nngp} (\cref{sec:svtp}). We call the latter as \gls{svtp} since it approximates the posteriors whose limit corresponds to Student's $t$ process. We compared \gls{svgp} and \gls{svtp} on multiple image classification benchmarks, including MNIST, CIFAR10, and SVHN. We also evaluated the predictive performance on \gls{ood} data for which we intentionally removed three training classes to save as \gls{ood} classes to be tested. We used four-layer CNN as a base model to compute \gls{nngp} kernels. See \cref{sec:experimental_details} for details. \cref{tab:table_classification_sv} summarizes the results, which show that \gls{svtp} and \gls{svgp} perform similarly for in-distribution data, but \gls{svtp} significantly outperforms \gls{svgp} for \gls{ood} data in terms of uncertainty calibration (measured by negative log-likelihoods).

\begin{table}[t]
    \scriptsize
    \caption{Classification accuracy and NLL of SVGP and SVTP for image datasets and their variants. NLL values are multiplied by $10^2$.}
    \label{tab:table_classification_sv}
    \centering
    \resizebox{\textwidth}{!}{
    \begin{tabular}{lrrrrr}
        \toprule
                                   & \multicolumn{2}{c}{SVGP}                               & \multicolumn{2}{c}{SVTP (Ours)}                        \\
        \cmidrule(lr){2-3} \cmidrule(lr){4-5}
        Dataset & NLL ($\times 10^2$)        & Accuracy (\%)             & NLL ($\times 10^2$)        & Accuracy (\%)      \\
        \midrule
        MNIST         & $          8.96 \pm{0.12}$ & $        97.73 \pm{0.03}$ & $\textbf{  8.90}\pm{0.04}$ & $\textbf{97.78}\pm{0.04}$ \\
        + Shot         & $          \textbf{24.22} \pm{0.08}$ & $        \textbf{94.63} \pm{0.05}$ & $24.28\pm{0.10}$ & $\textbf{94.63}\pm{0.07}$ \\
        + Impulse         & $          \textbf{56.52} \pm{0.88}$ & $        \textbf{90.29} \pm{0.65}$ & $57.91\pm{0.57}$ & $89.36\pm{0.58}$ \\
        + Spatter         & $          16.79 \pm{0.19}$ & $        95.95 \pm{0.04}$ & $\textbf{16.66}\pm{0.05}$ & $\textbf{95.99}\pm{0.04}$ \\
        + Glass Blur         & $          100.65 \pm{2.35}$ & $        62.63 \pm{0.65}$ & $\textbf{97.19}\pm{2.23}$ & $\textbf{63.52}\pm{0.74}$ \\
        w. OOD & $        216.58 \pm{1.83}$ & $\textbf{67.70}\pm{0.03}$ & $\textbf{206.86}\pm{1.85}$ & $        67.67 \pm{0.03}$ \\
        \addlinespace[2.5pt] \midrule \addlinespace[4.41pt]
        KMNIST        & $\textbf{ 53.93}\pm{0.13}$ & $        83.92 \pm{0.09}$ & $         53.95 \pm{0.30}$ & $\textbf{83.96}\pm{0.08}$ \\
        w. OOD & $        268.41 \pm{2.40}$ & $\textbf{60.76}\pm{0.06}$ & $\textbf{257.16}\pm{1.86}$ & $        60.46 \pm{0.05}$ \\
        \addlinespace[2.5pt] \midrule \addlinespace[4.41pt]
        Fashion MNIST & $         34.30 \pm{0.10}$ & $        87.84 \pm{0.13}$ & $\textbf{ 34.25}\pm{0.13}$ & $\textbf{87.90}\pm{0.05}$ \\
        w. OOD & $        252.61 \pm{3.51}$ & $\textbf{62.29}\pm{0.04}$ & $\textbf{241.69}\pm{2.45}$ & $        62.24 \pm{0.06}$ \\
        \bottomrule
    \end{tabular}
    \begin{tabular}{lrrrrr}
        \toprule
                            & \multicolumn{2}{c}{SVGP}                       & \multicolumn{2}{c}{SVTP (Ours)}                        \\
        \cmidrule(lr){2-3} \cmidrule(lr){4-5}
        Dataset     & NLL ($\times 10^2$)        & Accuracy (\%)             & NLL ($\times 10^2$)        & Accuracy (\%)             \\
        \midrule
        
        CIFAR10     & $\textbf{131.96}\pm{0.35}$ & $        54.09 \pm{0.12}$ & $        132.13 \pm{0.24}$ & $\textbf{54.10}\pm{0.13}$ \\
        + Shot 5    & $        143.11 \pm{0.29}$ & $        49.43 \pm{0.14}$ & $\textbf{142.30}\pm{0.18}$ & $\textbf{49.73}\pm{0.05}$ \\
        + Impulse 5 & $        164.90 \pm{0.11}$ & $        41.66 \pm{0.19}$ & $\textbf{160.79}\pm{0.75}$ & $\textbf{43.08}\pm{0.34}$ \\
        + Spatter 5 & $\textbf{141.11}\pm{0.30}$ & $\textbf{50.34}\pm{0.17}$ & $        141.14 \pm{0.15}$ & $        50.31 \pm{0.15}$ \\
        + Fog 5     & $        213.50 \pm{0.19}$ & $        25.47 \pm{0.21}$ & $\textbf{209.31}\pm{0.48}$ & $\textbf{26.03}\pm{0.16}$ \\
        + Snow 5    & $        166.44 \pm{0.51}$ & $\textbf{41.47}\pm{0.27}$ & $\textbf{166.41}\pm{0.28}$ & $\textbf{41.47}\pm{0.11}$ \\
        w. OOD      & $        341.59 \pm{1.75}$ & $\textbf{41.83}\pm{0.11}$ & $\textbf{333.70}\pm{1.83}$ & $        41.19 \pm{0.17}$ \\
        \midrule
        EMNIST      & $         56.49 \pm{1.24}$ & $        84.25 \pm{0.22}$ & $\textbf{ 54.92}\pm{0.84}$ & $\textbf{84.55}\pm{0.32}$ \\
        w. OOD      & $        183.25 \pm{1.40}$ & $\textbf{72.58}\pm{0.28}$ & $\textbf{177.65}\pm{0.43}$ & $        72.38 \pm{0.16}$ \\
        \midrule
        SVHN        & $\textbf{100.88}\pm{0.17}$ & $        71.67 \pm{0.11}$ & $        101.04 \pm{0.13}$ & $\textbf{71.71}\pm{0.06}$ \\
        w. OOD      & $        379.75 \pm{9.24}$ & $\textbf{46.86}\pm{0.19}$ & $\textbf{360.40}\pm{3.68}$ & $        46.43 \pm{0.08}$ \\
        \bottomrule
    \end{tabular}}
\end{table}

\subsection{Classification by Finite Model Ensemble}
\label{subsec:classification_ensemble}

Although we proved \cref{thm:fcn-general-training} only for fully-connected networks trained with squared losses, 
 \citet{lee2019wide} empirically observed that there still seem to be a similar correspondence for neural networks trained with cross-entropy loss. To this end, we tested the ensembles of wide finite CNNs with 4 layers trained by gradient descent over the cross-entropy loss. Each model in our test is initialized by the \gls{ntk} parameterization with random scales drawn from inverse gamma prior on the last layer (ours) or without such random scales (\gls{nngp}). For each prior setting, we constructed ensembles of 8 models, and compared the performance on MNIST, MNIST-like variants, CIFAR10, and SVHN. We also compared the models under the presence of corruption, \gls{ood} data, label imbalance, and label noises. See \cref{sec:experimental_details} for detailed description for the experimental setting. As for the previous experiments, \cref{tab:table_classification_ensemble} demonstrates that ours with inverse gamma prior largely outperforms the baseline, especially for the \gls{ood} or corrupted data.

\begin{table}[t]
    \scriptsize
    \caption{Classification accuracy and NLL of ensemble models for image datasets and their variants. We used 8 models of 4-layer CNN for our base ensemble model. NLL values are multiplied by $10^2$.}
    \label{tab:table_classification_ensemble}
    \centering
    \resizebox{\textwidth}{!}{
    \begin{tabular}{lrrrr}
        \toprule
                       & \multicolumn{2}{c}{Gaussian}                          & \multicolumn{2}{c}{Inverse Gamma Prior (Ours)}        \\
        \cmidrule(lr){2-3} \cmidrule(lr){4-5}
        Dataset        & NLL ($\times 10^2$)       & Accuracy (\%)             & NLL ($\times 10^2$)       & Accuracy (\%)             \\
        \midrule
        MNIST          & $         0.33 \pm{0.01}$ & $        98.94 \pm{0.04}$ & $\textbf{ 0.32}\pm{0.01}$ & $\textbf{98.98}\pm{0.04}$ \\
        + Shot         & $\textbf{ 1.42}\pm{0.14}$ & $\textbf{95.73}\pm{0.48}$ & $\textbf{ 1.42}\pm{0.17}$ & $\textbf{95.73}\pm{0.46}$ \\
        + Impulse      & $\textbf{ 5.91}\pm{0.68}$ & $\textbf{83.10}\pm{1.32}$ & $         6.27 \pm{0.78}$ & $        82.80 \pm{1.31}$ \\
        + Spatter      & $         0.76 \pm{0.02}$ & $        97.58 \pm{0.12}$ & $\textbf{ 0.74}\pm{0.02}$ & $\textbf{97.63}\pm{0.03}$ \\
        + Glass Blur   & $         3.25 \pm{0.31}$ & $        88.50 \pm{1.27}$ & $\textbf{ 2.83}\pm{0.20}$ & $\textbf{90.22}\pm{0.80}$ \\
        w. OOD         & $        82.71 \pm{2.65}$ & $\textbf{68.51}\pm{0.01}$ & $\textbf{74.31}\pm{2.93}$ & $        68.47 \pm{0.03}$ \\
        w. Imbalance   & $\textbf{ 1.50}\pm{0.21}$ & $\textbf{95.85}\pm{0.44}$ & $\textbf{ 1.50}\pm{0.23}$ & $        95.82 \pm{0.49}$ \\
        w. Noisy Label & $         7.70 \pm{0.06}$ & $\textbf{97.62}\pm{0.04}$ & $\textbf{ 7.63}\pm{0.06}$ & $        97.56 \pm{0.03}$ \\
        \midrule
        CIFAR10        & $\textbf{ 8.68}\pm{0.06}$ & $\textbf{70.77}\pm{0.31}$ & $         8.74 \pm{0.08}$ & $        70.22 \pm{0.30}$ \\
        + Shot 5       & $        16.50 \pm{0.27}$ & $        51.48 \pm{0.34}$ & $\textbf{15.28}\pm{0.38}$ & $\textbf{53.28}\pm{0.37}$ \\
        + Impulse 5    & $        32.39 \pm{1.11}$ & $        33.51 \pm{1.24}$ & $\textbf{29.20}\pm{1.49}$ & $\textbf{35.66}\pm{1.56}$ \\
        + Spatter 5    & $        12.83 \pm{0.23}$ & $        58.60 \pm{0.59}$ & $\textbf{12.36}\pm{0.17}$ & $\textbf{59.36}\pm{0.26}$ \\
        + Fog 5        & $        15.40 \pm{0.07}$ & $        45.23 \pm{0.07}$ & $\textbf{15.28}\pm{0.06}$ & $\textbf{45.73}\pm{0.22}$ \\
        + Snow 5       & $        13.18 \pm{0.25}$ & $        57.08 \pm{0.48}$ & $\textbf{12.76}\pm{0.08}$ & $\textbf{57.66}\pm{0.37}$ \\
        w. OOD         & $        81.57 \pm{0.01}$ & $\textbf{50.66}\pm{0.13}$ & $\textbf{80.62}\pm{0.86}$ & $        50.46 \pm{0.23}$ \\
        w. Imbalance   & $        19.22 \pm{0.12}$ & $        38.73 \pm{1.35}$ & $\textbf{19.05}\pm{0.18}$ & $\textbf{39.74}\pm{0.37}$ \\
        w. Noisy Label & $\textbf{15.88}\pm{0.03}$ & $\textbf{54.90}\pm{0.34}$ & $        15.90 \pm{0.05}$ & $        54.81 \pm{0.38}$ \\
        \bottomrule
    \end{tabular}
    \begin{tabular}{lrrrr}
        \toprule
                       & \multicolumn{2}{c}{Gaussian}                           & \multicolumn{2}{c}{Inverse Gamma Prior (Ours)}         \\
        \cmidrule(lr){2-3} \cmidrule(lr){4-5}
        Dataset        & NLL ($\times 10^2$)        & Accuracy (\%)             & NLL ($\times 10^2$)        & Accuracy (\%)             \\
        \midrule
        KMNIST         & $\textbf{  2.74}\pm{0.04}$ & $        93.24 \pm{0.20}$ & $          2.75 \pm{0.02}$ & $\textbf{93.34}\pm{0.16}$ \\
        w. OOD         & $         78.64 \pm{2.14}$ & $\textbf{65.64}\pm{0.07}$ & $\textbf{ 70.06}\pm{3.14}$ & $        65.61 \pm{0.13}$ \\
        w. Imbalance   & $          9.47 \pm{0.27}$ & $        77.04 \pm{0.96}$ & $\textbf{  9.12}\pm{0.57}$ & $\textbf{77.23}\pm{1.67}$ \\
        w. Noisy Label & $         10.78 \pm{0.03}$ & $\textbf{84.42}\pm{0.10}$ & $\textbf{ 10.67}\pm{0.03}$ & $        84.30 \pm{0.12}$ \\
        \addlinespace[1pt] \midrule \addlinespace[3.87pt]
        Fashion MNIST  & $\textbf{  2.36}\pm{0.04}$ & $\textbf{91.92}\pm{0.11}$ & $          2.37 \pm{0.02}$ & $        91.82 \pm{0.11}$ \\
        w. OOD         & $         62.17 \pm{0.50}$ & $        64.47 \pm{0.09}$ & $\textbf{ 58.57}\pm{0.97}$ & $\textbf{64.48}\pm{0.07}$ \\
        w. Imbalance   & $\textbf{  5.54}\pm{0.03}$ & $\textbf{83.09}\pm{0.12}$ & $          5.63 \pm{0.08}$ & $        83.00 \pm{0.11}$ \\
        w. Noisy Label & $          9.30 \pm{0.08}$ & $        87.05 \pm{0.07}$ & $\textbf{  9.20}\pm{0.06}$ & $\textbf{87.27}\pm{0.19}$ \\
        \addlinespace[1pt] \midrule \addlinespace[3.87pt]
        EMNIST         & $          0.76 \pm{0.01}$ & $        91.25 \pm{0.13}$ & $\textbf{  0.75}\pm{0.01}$ & $\textbf{91.30}\pm{0.05}$ \\
        w. OOD         & $          8.37 \pm{0.15}$ & $        76.59 \pm{0.09}$ & $\textbf{  8.25}\pm{0.17}$ & $\textbf{76.72}\pm{0.08}$ \\
        w. Noisy Label & $          3.23 \pm{0.02}$ & $\textbf{86.64}\pm{0.17}$ & $\textbf{  3.15}\pm{0.01}$ & $        86.25 \pm{0.20}$ \\
        \addlinespace[1pt] \midrule \addlinespace[3.87pt]
        SVHN           & $\textbf{  4.68}\pm{0.05}$ & $\textbf{87.84}\pm{0.17}$ & $          4.70 \pm{0.04}$ & $        87.82 \pm{0.13}$ \\
        w. OOD         & $        105.17 \pm{1.75}$ & $\textbf{56.92}\pm{0.03}$ & $\textbf{101.87}\pm{1.92}$ & $\textbf{56.92}\pm{0.03}$ \\
        w. Imbalance   & $\textbf{ 14.20}\pm{0.17}$ & $        63.64 \pm{0.32}$ & $         14.88 \pm{0.19}$ & $\textbf{61.97}\pm{0.92}$ \\
        w. Noisy Label & $\textbf{ 12.22}\pm{0.11}$ & $\textbf{80.24}\pm{0.20}$ & $         12.36 \pm{0.07}$ & $        79.84 \pm{0.24}$ \\
        \bottomrule
    \end{tabular}}
\end{table}

\section{Conclusion}
\label{sec:conclusion}
In this paper, we proposed a simple extension of \glspl{nngp} by introducing a scale prior on the last-layer weight parameters. The resulting method, entitled as the scale mixture of \glspl{nngp}, defines a broad class of stochastic processes, especially the heavy-tailed ones such as Student's $t$ processes. Based on the result in \citep{yang2019tensor}, we have shown that an infinitely-wide \gls{bnn} of any architecture constructed in a specific way corresponds to the scale mixture of \glspl{nngp}. Also, we have extended the existing convergence results of infinitely-wide \glspl{bnn} trained with gradient descent to \glspl{gp} so that the results hold for our construction. Our empirical evaluation validates our theory and shows promising results for multiple real-world regression and classification tasks. Especially, it shows that the heavy-tailed processes from our construction are robust to the out-of-distribution data.

\section*{Acknowledgements and Disclosure of Funding}

We would like to thank Jaehoon Lee for helping us to understand his results on NNGP and NTK. This work was partly supported by Institute of Information $\&$ communications Technology Planning
$\&$ Evaluation (IITP) grant funded by the Korea government (MSIT) (No.2019-0-00075, Artificial
Intelligence Graduate School Program(KAIST), No. 2021-0-02068, Artificial Intelligence Innovation
Hub), National Research Foundation of Korea (NRF) funded by the Ministry of Education (NRF2021R1F1A1061655, NRF-2021M3E5D9025030). HY was supported by the Engineering Research Center Program through the National Research Foundation of Korea (NRF) funded by the Korean Government MSIT (NRF-2018R1A5A1059921) and also by the Institute for Basic Science (IBS-R029-C1).

\bibliography{references}

\begin{thebibliography}{19}
\providecommand{\natexlab}[1]{#1}
\providecommand{\url}[1]{\texttt{#1}}
\expandafter\ifx\csname urlstyle\endcsname\relax
  \providecommand{\doi}[1]{doi: #1}\else
  \providecommand{\doi}{doi: \begingroup \urlstyle{rm}\Url}\fi

\bibitem[Adlam et~al.(2020)Adlam, Lee, Xiao, Pennington, and
  Snoek]{adlam2020exploring}
Ben Adlam, Jaehoon Lee, Lechao Xiao, Jeffrey Pennington, and Jasper Snoek.
\newblock Exploring the uncertainty properties of neural networks' implicit
  priors in the infinite-width limit.
\newblock \emph{arXiv preprint arXiv:2010.07355}, 2020.

\bibitem[Andrews \& Mallows(1974)Andrews and Mallows]{andrews1974scale}
D.~F. Andrews and C.~L. Mallows.
\newblock Scale mixtures of normal distributions.
\newblock \emph{Journal of the Royal Statistical Society: Series B
  (Methodological)}, 36, 1974.

\bibitem[Bracale et~al.(2021)Bracale, Favaro, Fortini, and
  Peluchetti]{bracale2021stablecnn}
D.~Bracale, S.~Favaro, S.~Fortini, and S.~Peluchetti.
\newblock Infinite-channel deep stable convolutional neural networks.
\newblock \emph{arXiv:2021.03739}, 2021.

\bibitem[Bradbury et~al.(2018)Bradbury, Frostig, Hawkins, Johnson, Leary,
  Maclaurin, Necula, Paszke, Vander{P}las, Wanderman-{M}ilne, and
  Zhang]{jax2018github}
James Bradbury, Roy Frostig, Peter Hawkins, Matthew~James Johnson, Chris Leary,
  Dougal Maclaurin, George Necula, Adam Paszke, Jake Vander{P}las, Skye
  Wanderman-{M}ilne, and Qiao Zhang.
\newblock {JAX}: composable transformations of {P}ython+{N}um{P}y programs,
  2018.
\newblock URL \url{http://github.com/google/jax}.

\bibitem[Favaro et~al.(2020)Favaro, Fortini, and Peluchetti]{favaro2020stable}
S.~Favaro, S.~Fortini, and S.~Peluchetti.
\newblock Stable behaviour of infinitely wide deep neural networks.
\newblock In \emph{Proceedings of The 23rd International Conference on
  Artificial Intelligence and Statistics (AISTATS 2020)}, 2020.

\bibitem[Garriga-Alonso et~al.(2019)Garriga-Alonso, Rasmussen, and
  Aitchison]{garriga2019deep}
Adri{\`a} Garriga-Alonso, Carl~Edward Rasmussen, and Laurence Aitchison.
\newblock Deep convolutional networks as shallow {Gaussian} processes.
\newblock In \emph{International Conference on Learning Representations
  (ICLR)}, 2019.

\bibitem[Hron et~al.(2020)Hron, Bahri, Sohl-Dickstein, and
  Novak]{hron2020infinite}
Jiri Hron, Yasaman Bahri, Jascha Sohl-Dickstein, and Roman Novak.
\newblock Infinite attention: {NNGP} and {NTK} for deep attention networks.
\newblock In \emph{Proceedings of The 37th International Conference on Machine
  Learning (ICML 2020)}, pp.\  4376--4386. PMLR, 2020.

\bibitem[Jacot et~al.(2018)Jacot, Gabriel, and Hongler]{jacot2018ntk}
A.~Jacot, F.~Gabriel, and C.~Hongler.
\newblock Neural tangent kernel: convergence and generalization in neural
  networks.
\newblock In \emph{Advances in Neural Information Processing Systems 31
  (NeurIPS 2018)}, 2018.

\bibitem[Lee et~al.(2019)Lee, Xiao, Schoenholz, Bahri, Novak, Sohl-Dickstein,
  and Pennington]{lee2019wide}
J.~Lee, L.~Xiao, S.~S. Schoenholz, Y.~Bahri, R.~Novak, J.~Sohl-Dickstein, and
  J.~Pennington.
\newblock Wide neural networks of any depth evolve as linear models under
  gradient descent.
\newblock In \emph{Advances in Neural Information Processing Systems 32
  (NeurIPS 2019)}, 2019.

\bibitem[Lee et~al.(2018)Lee, Bahri, Novak, Schoenholz, Pennington, and
  Sohl-Dickstein]{lee2018deep}
Jaehoon Lee, Yasaman Bahri, Roman Novak, Samuel~S Schoenholz, Jeffrey
  Pennington, and Jascha Sohl-Dickstein.
\newblock Deep neural networks as {Gaussian} processes.
\newblock In \emph{International Conference on Learning Representations
  (ICLR)}, 2018.

\bibitem[Matthews et~al.(2017)Matthews, Hron, Turner, and
  Ghahramani]{matthews2017sample}
A.~G. de~G. Matthews, J.~Hron, R.~E. Turner, and Z.~Ghahramani.
\newblock Sample-then-optimize posterior sampling for {Bayesian} linear models.
\newblock \emph{NeurIPS Workshop on Advances in Approximate Bayesian
  Inference}, 2017.

\bibitem[Matthews et~al.(2018)Matthews, Rowland, Hron, Turner, and
  Ghahramani]{matthews2018gaussian}
Alexander G de~G Matthews, Mark Rowland, Jiri Hron, Richard~E Turner, and
  Zoubin Ghahramani.
\newblock Gaussian process behaviour in wide deep neural networks.
\newblock In \emph{International Conference on Learning Representations
  (ICLR)}, 2018.

\bibitem[Neal(1996)]{neal1996priors}
R.~M. Neal.
\newblock Priors for infinite networks.
\newblock In \emph{Bayesian Learning for Neural Networks}, pp.\  29--53.
  Springer, 1996.

\bibitem[Novak et~al.(2018)Novak, Xiao, Lee, Bahri, Yang, Hron, Abolafia,
  Pennington, and Sohl-Dickstein]{novak2018bayesian}
Roman Novak, Lechao Xiao, Jaehoon Lee, Yasaman Bahri, Greg Yang, Jiri Hron,
  Daniel~A Abolafia, Jeffrey Pennington, and Jascha Sohl-Dickstein.
\newblock Bayesian deep convolutional networks with many channels are
  {Gaussian} processes.
\newblock In \emph{International Conference on Learning Representations
  (ICLR)}, 2018.

\bibitem[Novak et~al.(2020)Novak, Xiao, Hron, Lee, Alemi, Sohl-Dickstein, and
  Schoenholz]{neuraltangents2020}
Roman Novak, Lechao Xiao, Jiri Hron, Jaehoon Lee, Alexander~A. Alemi, Jascha
  Sohl-Dickstein, and Samuel~S. Schoenholz.
\newblock Neural tangents: Fast and easy infinite neural networks in python.
\newblock In \emph{International Conference on Learning Representations
  (ICLR)}, 2020.
\newblock URL \url{https://github.com/google/neural-tangents}.

\bibitem[Shah et~al.(2014)Shah, Wilson, and Ghahramani]{shah2014st}
A.~Shah, A.~Wilson, and Z.~Ghahramani.
\newblock Student-t processes as alternatives to {Gaussian} processes.
\newblock In \emph{Proceedings of The 17th International Conference on
  Artificial Intelligence and Statistics (AISTATS 2014)}, 2014.

\bibitem[Soch \& Allefeld(2016)Soch and Allefeld]{soch2016kullback}
Joram Soch and Carsten Allefeld.
\newblock Kullback-leibler divergence for the normal-gamma distribution.
\newblock \emph{arXiv preprint arXiv:1611.01437}, 2016.

\bibitem[Titsias(2009)]{titsias2009svgp}
M.~Titsias.
\newblock Variational learning of inducing variables in sparse {Gaussian}
  processes.
\newblock In \emph{Proceedings of The 12th International Conference on
  Artificial Intelligence and Statistics (AISTATS 2009)}, 2009.

\bibitem[Yang(2019)]{yang2019tensor}
Greg Yang.
\newblock Tensor programs \uppercase{I}: Wide feedforward or recurrent neural
  networks of \uppercase{A}ny architecture are \uppercase{G}aussian processes.
\newblock \emph{arXiv preprint arXiv:1910.12478}, 2019.

\end{thebibliography}
\bibliographystyle{iclr2022_conference}

\clearpage

\appendix

\section{Details on Tensor Programs}
\label{sec:details_tensor_programs}

In this section, we will review tensor programs from \citep{yang2019tensor}, which are sequences of assignment statements followed by a return statement where each variable is updated at most once. 

Tensor programs use the three types of variables: \emph{G-variables}, \emph{H-variables}, and \emph{A-variables}. G- and H-variables are vector-type variables, and A-variables are matrix-type variables. Each tensor program is parameterized by $n \in \bbN$, which determines the dimension of all vectors (i.e., G- and H-variables) and matrices (i.e., A-variables) in the program to $n$ and $n\times n$, respectively. Due to the random initialization of some of these variables, the contents of these variables are random in general. G-variables denote Gaussian-distributed random variables, and H-variables store the results of applying (usually nonlinear) functions on G-variables. A-variables store random matrices whose entries are set with iid samples from a Gaussian distribution. All H-variables in a tensor program are set their values by some assignments, but some G-variables and all H-variables may not be assigned to in the program. Such G-variables are called input G-variables.

Assignments in tensor programs have one of the three kinds: \emph{MatMul}, \emph{LinComb}, and \emph{Nonlin}. 
The MatMul assignments have the form $y = Wx$ where $y$ is a $G$-variable, $W$ is an A-variable, and $x$ is a H-variable. 
The LinComb assignments compute linear combinations, and they have the form $y = \sum_{i=1}^l w_ix^i$ where $y,x^1,\ldots,x^l$ are G-variables, and $w_1,\ldots,w_l\in\bbR$ are constants.
The final Nonlin assignments apply (usually nonlinear) scala functions on the values of G-variables coordinatewise.
That is, they have the form $y = \phi(x^1,\ldots,x^l)$, where $y$ is an H-variable, $x^1,\ldots,x^l$ are G-variables, $\phi : \bbR^l \to \bbR$ is a possibly nonlinear function, and $\phi(x^1,\ldots,x^l)$ means the lifted application of $\phi$ to vector arguments. 

Every tensor program ends with a return statement of the form: 
\[
\texttt{return}(v^{1\top}y^1/\sqrt{n},\ldots,v^{l\top}y^l/\sqrt{n}),
\]
where $v^1,\ldots,v^l$ are G-variables not appearing anywhere in the program, and $y^1,\ldots,y^l$ are H-variables.

\begin{assump}\label{assump:one}
Every A-variable $W$ in a tensor program is initialized with iid samples from the Gaussian distribution
with mean zero and covariance $\sigma_W^2/n$ for some $\sigma_W > 0$ (i.e.,  $W_{ij}\sim \calN(0,\sigma_W^2/n)$ for $i,j \in [n]$), and these samples are independent with those used for other A-variables and input G-variables. 
For every $\alpha \in [n]$, the components of all input G-variables $x^1,\ldots,x^m$ (which store vectors in $\bbR^n$) are initialized with independent samples from $\calN(\mu^\text{in}, \Sigma^\text{in})$ for some mean $\mu^{\text{in}}$ and (possibly singular) covariance $\Sigma^{in}$ over all input G-variables. Note that the mean and the covariance do not depend on the component index $\alpha \in [n]$. Finally, the input G-variables $v^i$ used in the return statement of the program are independent with all the other random variables in the program, and their components are set by iid samples from $\calN(0,\sigma_v^2)$.
\end{assump}
\begin{defn}\label{defn:meancovariance}
Given a tensor program satisfying \cref{assump:one}, we can compute $\mu$ and $\Sigma$ for not only input G-variables but all the G-variables in the program. 
Here the dimension of the vector $\mu$ is the number $N$ of all G-variables. 
The computed $\mu$ and $\Sigma$ specify the multivariate Gaussian distribution over the $\alpha$-th components of G-variables that arises at the infinite-width limit (\cref{thm:NETSOR}).
The computation is defined inductively as follows. Let $g^1,\ldots,g^N$ be all the G-variables in the program,
ordered as they appear in the program.
For any G-variable $g^k$, $\mu(g^k)$ is defined by:
\begin{align*}
    \mu(g^k)&=\begin{cases}
    \mu^{\text{in}}(g^k) & \text{if $g^k$ is an input G-variable}\\
    \sum_{i=1}^m w_i\mu(x^i) & \text{if $g^k=\sum_{i=1}^m a_ix^i$}\\
    0 & \text{otherwise}
    \end{cases}.
\end{align*}
Here $x^1,\ldots,x^m$ are some G-variables. For any pair of G-variables $g^k,g^l$, $\Sigma(g^k,g^l)$ is defined by:
\begin{align*}
    \Sigma(g^k,g^l)&=\begin{cases}
    \Sigma^{\text{in}}(g^k,g^l) & \text{if $g^k,g^l$ are input G-variables}\\
    \sum_{i=1}^m w_i\Sigma(x^i,g^l) & \text{if $g^k=\sum_{i=1}^m w_i x^i$}\\
    \sum_{i=1}^m w_i\Sigma(g^k,x^i) & \text{if $g^l=\sum_{i=1}^m w_i x^i$}\\
    \sigma_W^2 \bbE_Z[\phi(Z)\phi'(Z)] & \text{if $g^k=Wy$, $g^l=Wy'$ for the same $W$, with $Z$,$\phi$,$\phi'$ below}\\
    0 & \text{otherwise}
    \end{cases}.
\end{align*}
Here $x^1,\ldots,x^m$ are some G-variables, and H-variables $y$ and $y'$ in the second last case are introduced by the Nonlin assignments $y= \phi(g^{i_1},\ldots,g^{i_p})$ and $y' = \phi'(g^{i_1},\ldots,g^{i_p})$, respectively, for some $i_1,\ldots,i_p \in [\max(k-1,l-1)]$.
We have adjusted the inputs of $\phi$ and $\phi'$ to ensure that they take the same set of G-variables. 
The random variable $Z$ in the second last case is a random Gaussian vector drawn from $\calN(\mu',\Sigma')$ where $\mu'$ and $\Sigma'$ are the mean vector and covariance matrix for $g^{i_1},\ldots,g^{i_p}$ by the previous step of this inductive construction.
\end{defn}
\begin{defn}
Given $h:\bbR^n\rightarrow \bbR$ and for any $x\in\bbR^n$ if $|h(x)|$ is upper bounded by a function $\exp (C\|x\|^{2-\epsilon}+c)$ with $C,c,\epsilon > 0$, then this function is \textit{controlled}.
\end{defn}
\begin{thm}[Master Theorem]\label{thm:NETSOR}
Consider a tensor program satisfying \cref{assump:one} and using only controlled $\phi$'s. Let $g^1,\ldots,g^N$ be all of the G-variables that appear in this program, and $h:\bbR^N\rightarrow\bbR$ be a controlled function. Then, as $n$ tends to $\infty$,
\begin{align*}
    \frac{1}{n}\sum_{\alpha = 1}^nh(g_\alpha^1,\ldots, g_\alpha^N)\xrightarrow{a.s.} \bbE_{Z\sim \calN(\mu,\Sigma)}[h(Z_1,\ldots,Z_N)].
\end{align*}
Here $\xrightarrow{a.s.}$ refers to almost sure convergence, and $Z=(Z_1,\ldots,Z_N)\in \bbR^N$. Also, $\mu$ and $\Sigma$ are from \cref{defn:meancovariance}.
\end{thm}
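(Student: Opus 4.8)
The plan is to prove the statement by induction on the structure of the tensor program, processing the G-variables $g^1,\ldots,g^N$ in the order in which they appear. Rather than proving the claim for a single controlled $h$, I would strengthen the inductive hypothesis to the assertion that for \emph{every} controlled $h:\bbR^k\to\bbR$, the empirical average $\frac1n\sum_{\alpha=1}^n h(g^1_\alpha,\ldots,g^k_\alpha)$ converges almost surely to $\bbE_{Z\sim\calN(\mu^{(k)},\Sigma^{(k)})}[h(Z)]$, where $\mu^{(k)},\Sigma^{(k)}$ are the restrictions of the quantities built in \cref{defn:meancovariance} to the first $k$ G-variables. This stronger statement is what composes cleanly through the program's operations, and for $k=N$ it is exactly the theorem. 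The base case concerns the input G-variables: by \cref{assump:one} their coordinates are i.i.d.\ draws from $\calN(\mu^{\text{in}},\Sigma^{\text{in}})$, so the strong law of large numbers gives the claim directly, with the controlled condition on $h$ supplying the integrability needed (a Gaussian vector $Z$ has finite $\bbE[\exp(C\|Z\|^{2-\epsilon})]$).

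For the inductive step I would treat the three assignment kinds separately, leaving the MatMul case for last. A \emph{LinComb} assignment $g^k=\sum_i w_i x^i$ introduces a G-variable whose coordinates are a fixed linear function of earlier coordinates, so any average involving $g^k$ rewrites as an average of a controlled function of $g^1,\ldots,g^{k-1}$ and the inductive hypothesis applies; one then checks that the induced limiting mean and covariance agree with the LinComb clauses of \cref{defn:meancovariance}. A \emph{Nonlin} assignment $y=\phi(g^{i_1},\ldots,g^{i_p})$ creates only an H-variable and hence no new Gaussian coordinate; since a controlled function composed with a controlled $\phi$ is again controlled, such H-variables enter subsequent averages harmlessly and the SLLN still passes through, once more using the controlled condition for integrability.

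The crux is the \emph{MatMul} case $g^k=Wy$, where the same matrix $W$ may already have been used, producing genuine correlations (the clause $\Sigma(g^k,g^l)=\sigma_W^2\,\bbE_Z[\phi(Z)\phi'(Z)]$ when $g^k=Wy$ and $g^l=Wy'$). Here I would invoke the Gaussian conditioning technique: condition on the $\sigma$-algebra generated by $W$'s action on all previously formed vectors, and decompose $Wy$ into a conditional-mean part (a linear combination of the earlier $W$-images, which reproduces exactly the predicted correlations) plus a fresh component that is, conditionally, an independent Gaussian of the correct variance. The key quantitative input is that inner products of H-variables concentrate, i.e.\ $\frac1n\, y\cdot y'\to\sigma_W^2\,\bbE_Z[\phi(Z)\phi'(Z)]$ by the inductive hypothesis, which pins down the covariance, while independence of $W$ from the non-$W$ variables yields the vanishing cross-covariances. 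I expect this MatMul step---showing the fresh part is asymptotically Gaussian and independent while the conditioned part carries precisely the $\Sigma$ predicted so far---to be the main obstacle, together with the final upgrade from convergence in probability to \emph{almost sure} convergence, which I would obtain through Gaussian concentration and moment bounds (furnished by the controlled growth condition) combined with a Borel--Cantelli argument controlling the rate in $n$. Assembling the induction through the last G-variable then delivers the stated almost sure limit for the given $h$ on $g^1,\ldots,g^N$.
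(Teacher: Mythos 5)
You should know at the outset that the paper does not prove \cref{thm:NETSOR} at all: the appendix in which it appears is an explicit review of \citet{yang2019tensor}, and the master theorem is imported from that work as a black box (it is the engine behind \cref{thm:master-theorem}, and the paper's own proofs only ever invoke it after conditioning on $\sigma_v^2$). So there is no in-paper proof to compare yours against; the fair benchmark is the proof in the cited source. Measured against that, your plan reconstructs the genuine strategy: induction over the statements of the program, with the master-theorem statement itself (for \emph{all} controlled test functions, on the prefix of G-variables) serving as the strengthened inductive hypothesis; the strong law at the input G-variables, whose coordinates are i.i.d.\ by \cref{assump:one}; trivial absorption of LinComb; and, for MatMul, the Gaussian conditioning trick --- decompose $Wy$ given the previous images of $W$ into a conditional-mean correction plus a fresh conditionally Gaussian part whose variance is pinned down by the inductive hypothesis via $\tfrac1n\,y\cdot y'$, with the almost-sure upgrade obtained from moment bounds and Borel--Cantelli. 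This is indeed how the original proof is organized, and the covariance clause $\Sigma(g^k,g^l)=\sigma_W^2\,\bbE_Z[\phi(Z)\phi'(Z)]$ of \cref{defn:meancovariance} emerges exactly where you say it does.

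Two cautions. First, your Nonlin step asserts that a controlled function composed with a controlled $\phi$ is again controlled; this is false in general (take $\phi(z)=\exp(|z|^{3/2})$ and $h(x)=\exp(|x|^{3/2})$: the composition is doubly exponential and not bounded by $\exp(C|z|^{2-\epsilon}+c)$ for any $\epsilon>0$). Fortunately you never need composition closure: the theorem applies $h$ to G-variables only, and the only H-variable statistics the induction consumes are inner products $\tfrac1n\,y\cdot y'=\tfrac1n\sum_\alpha \phi(g_\alpha)\phi'(g_\alpha)$, where the \emph{product} $\phi\phi'$ of controlled functions is controlled; you should route the argument that way. Second, your MatMul paragraph names the right tools but defers the real content. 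The fresh part $\widehat W y$ is \emph{exactly} Gaussian conditionally on $y$, so no asymptotic-Gaussianity argument is needed there; the genuine difficulties are (i) the conditional mean involves (pseudo-)inverses of random Gram matrices of the earlier inputs, whose correction terms must be shown to vanish in the limit, and (ii) since $y$ is a function of \emph{all} earlier coordinates, the coordinates $(g^k_\alpha)_\alpha$ are i.i.d.\ only conditionally, so the joint empirical average over $\alpha$ requires an explicit conditional-variance or fourth-moment estimate (summable in $n$) before Borel--Cantelli applies --- a generic appeal to ``Gaussian concentration'' does not substitute for this computation. As a blind reconstruction of the architecture of the known proof your proposal is on target; as a proof it leaves open precisely the steps that make the master theorem hard.
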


\begin{cor}
Consider a tensor program satisfying \cref{assump:one} and using only controlled $\phi$'s. Let $g^1,\ldots,g^N$ be all of the G-variables that appear in this program. Assume that the program returns a vector $(v^\top y^1/\sqrt{n},\ldots,v^\top y^k/\sqrt{n})$ for some H-variables $y^i$'s such that each $y_i$ is updated by $y^i =\phi^i(g^1,\ldots,g^N)$ for a controlled $\phi^i$. Then, as $n$ approaches to $\infty$, the joint distribution of this vector converges weakly to the multivariate Gaussian distribution $\calN(0,\calK)$ with the following
covariance matrix:
\begin{align*}
    \calK_{(i,j)} =\sigma_v^2\cdot \bbE_{Z\sim\calN(\mu,\Sigma)}[\phi^i(Z)\phi^j(Z)] \ \text{ for all $i,j \in [k]$},
\end{align*}
where $\mu$ and $\Sigma$ are from \cref{defn:meancovariance}.
\end{cor}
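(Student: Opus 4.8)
The plan is to exploit the fact that, conditionally on all G-variables other than the readout vector $v$, the returned vector is a linear image of the Gaussian vector $v$ and is therefore itself Gaussian; the Master Theorem then pins down the limiting conditional covariance, and a characteristic-function argument upgrades this to unconditional weak convergence.

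First I would write each component of the output as $F_{n,i} = v\tr y^i/\sqrt n = \frac{1}{\sqrt n}\sum_{\alpha=1}^n v_\alpha y^i_\alpha$, and let $\calG_n$ denote the information carried by $g^1,\ldots,g^N$ (so that every $y^i = \phi^i(g^1,\ldots,g^N)$ is $\calG_n$-measurable). By \cref{assump:one}, $v \sim \calN(0,\sigma_v^2\bI_n)$ is independent of $\calG_n$, so conditionally on $\calG_n$ the vector $F_n = (F_{n,1},\ldots,F_{n,k})$ is a fixed linear combination of a centred Gaussian and hence itself centred Gaussian, with conditional covariance
\begin{align*}
(C_n)_{ij} \defas \mathrm{Cov}(F_{n,i},F_{n,j}\mid \calG_n) = \frac{\sigma_v^2}{n}\sum_{\alpha=1}^n y^i_\alpha y^j_\alpha = \frac{\sigma_v^2}{n}\sum_{\alpha=1}^n \phi^i(g^1_\alpha,\ldots,g^N_\alpha)\,\phi^j(g^1_\alpha,\ldots,g^N_\alpha).
\end{align*}

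Next I would apply the Master Theorem (\cref{thm:NETSOR}) with the test function $h = \phi^i\cdot\phi^j$. A short preliminary check shows $h$ is controlled: if $\phi^i$ and $\phi^j$ are each dominated by $\exp(C|z|^{2-\epsilon}+c)$, their product is dominated by an expression of the same form after enlarging the constants and taking the smaller $\epsilon$. The theorem then yields, simultaneously for all $i,j\in[k]$,
\begin{align*}
(C_n)_{ij} \xrightarrow{a.s.} \sigma_v^2\cdot \bbE_{Z\sim\calN(\mu,\Sigma)}[\phi^i(Z)\phi^j(Z)] = \calK_{(i,j)},
\end{align*}
so that $C_n \to \calK$ entrywise almost surely. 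Note that because $h$ ignores the $v$-coordinates and $v$ is independent of the remaining G-variables, the limiting expectation involves only the $\calN(\mu,\Sigma)$ marginal of $(g^1,\ldots,g^N)$, exactly as claimed.

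Finally I would pass from conditional Gaussianity to the unconditional limit via characteristic functions, with $\mathrm{i}$ denoting the imaginary unit. For any $t\in\bbR^k$, taking conditional expectations and using the Gaussian form gives
\begin{align*}
\bbE\big[\exp(\mathrm{i}\, t\tr F_n)\big] = \bbE\big[\exp(-\tfrac12 t\tr C_n t)\big].
\end{align*}
Since $C_n$ is positive semidefinite the integrand is bounded by $1$, and by the previous step $\exp(-\tfrac12 t\tr C_n t)\to \exp(-\tfrac12 t\tr\calK t)$ almost surely; bounded convergence then gives convergence of $\bbE[\exp(\mathrm{i}\, t\tr F_n)]$ to $\exp(-\tfrac12 t\tr\calK t)$, the characteristic function of $\calN(0,\calK)$, and L\'evy's continuity theorem concludes the weak convergence. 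I expect the only genuinely delicate point to be this last interchange of limit and expectation: everything hinges on the conditional characteristic function being uniformly bounded (guaranteed by positive semidefiniteness of $C_n$) so that the almost-sure convergence supplied by the Master Theorem transfers to convergence of the unconditional characteristic functions. The control check on $\phi^i\phi^j$ and the independence of $v$ are the supporting, but routine, ingredients.
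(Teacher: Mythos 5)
Your proof is correct, and its skeleton coincides with the argument the paper actually runs (for this corollary the paper gives no separate proof, but its proof of \cref{thm:convergence-mixture-gps} is the direct analogue, with the extra conditioning on the random $\sigma_v^2$ that is unnecessary here since $\sigma_v^2$ is deterministic): condition on the $\sigma$-field generated by the G-variables, observe that the returned vector is conditionally centred Gaussian with empirical covariance $C_n$, and invoke the master theorem with the controlled test function $\phi^i\phi^j$ to get $C_n \to \calK$ almost surely. Where you genuinely depart is the final limit passage. The paper tests against arbitrary bounded continuous $h$, writes the conditional expectation as an integral against the Gaussian density $\calN(u;0,\precalK[n])$, and invokes dominated convergence twice --- once for the inner Gaussian integrals and once to swap the limit with the outer expectation. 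Your route instead computes the conditional characteristic function $\exp(-\tfrac12 t\tr C_n t)$ in closed form and applies bounded convergence plus L\'evy's continuity theorem. This buys you two things: the integrand is \emph{uniformly} bounded by $1$ with no dominating function to exhibit, and the argument is insensitive to whether the limiting covariance $\calK$ is singular, whereas the paper's pointwise convergence of Gaussian densities implicitly requires a nondegenerate limit (and its domination step is left unjustified). Your preliminary check that the product of two controlled functions is controlled (enlarging constants and taking the smaller $\epsilon$) is exactly the routine verification needed, and intersecting the finitely many almost-sure events gives the simultaneous convergence of all entries of $C_n$, so no gap remains.
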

\section{Proofs}
\label{sec:proofs}
In this section, we provide the proofs of our three main results. We first describe a lemma related to a basic property of Student's $t$ distribution:
\begin{lem}
\label{lem:students-t-marginalization-conditioning}
The class of multivariate $t$ distributions is closed under marginalization and conditioning. That is, when
\[
\bx=\begin{bmatrix}\bx_1\\\bx_2\end{bmatrix} \in \bbR^{d_1 + d_2} 
\sim \MVT_{d_1+d_2}\left(\nu,\begin{bmatrix}\mu_1\\\mu_2\end{bmatrix},\begin{bmatrix}\Sigma_{11}&\Sigma_{12}\\\Sigma_{21}&\Sigma_{22}\end{bmatrix}\right),
\]
we have that
\[
    \bx_1 \sim \MVT_{d_1}(\nu,\mu_1,\Sigma_{11})
    \quad\text{and}\quad
    \bx_2|\bx_1\sim \MVT_{d_2}\left(\nu+d_1,\,\mu_{2}',\,\frac{\nu+c}{\nu+d_1}\Sigma_{22}'\right)
\]
where 
\begin{align*}
\mu_{2}' & =\mu_2+\Sigma_{21}\Sigma_{11}^{-1}(\bx_1-\mu_1),
& 
c & =(\bx_1-\mu_1)^\top\Sigma_{11}^{-1}(\bx_1-\mu_1),
&
\Sigma_{22}' & =\Sigma_{22}-\Sigma_{21}\Sigma_{11}^{-1}\Sigma_{12}.
\end{align*}
\end{lem}
\begin{defn}\label{defn:inversegamma}
A distribution on $\bbR_+$ is a \emph{inverse gamma distribution} with shape parameter $a\in\bbR_+$ and scale parameter $b\in\bbR_+$ if it has the following density:
\begin{align*}
    p(x) = \frac{b^a}{G(a)}\left(\frac{1}{x}\right)^{a+1}e^{-b/x}
\end{align*}
where $G(\cdot)$ is the gamma function. To express a random variable drawn from this distribution,
we write $x \sim \InvGamma(a,b)$.
\end{defn}
\begin{lem}\label{lem:marginal}
Let $\Sigma\in\bbR^{d\times d}$ be a postive definite matrix. Consider $\mu\in\bbR^n$ and $a,b\in\bbR_{+}$. If
\begin{align*}
    \sigma^2&\sim \InvGamma(a,b),
    & 
    \bx|\sigma^2&\sim\calN(\mu,\sigma^2\Sigma),
\end{align*}
then the marginal distribution of $\bx$ is $\MVT_{d}(2a,\mu,\frac{b}{a}\Sigma)$.
\end{lem}

Now we proceed to the proofs of main theorems.
\convergence*

\begin{proof}
Let $o^i[n] = f_n(\bx_i;\bv_n,\bPsi_n)$ for $i\in[K+L]$ and $n$. Pick an arbitrary bounded continuous function $h:\bbR^{K+L}\rightarrow\bbR$. For each $n$,
define $\calG_n$ to be the $\sigma$-field generated by $g_n(\bx_1;\bPsi_n),\ldots,g_n(\bx_{K+L};\bPsi_n)$.

We claim that as $n$ tends to $\infty$,
\begin{align*}
    \bbE\left[h\left(o^1[n],\ldots,o^{K+L}[n]\right)\right]\rightarrow\bbE_{\sigma_v^2\sim\calH}\left[\bbE_{Y^{1:K+L}\sim\calN\left(0,\sigma_v^2\precalK\right)}\left[h\left(Y^1,\ldots,Y^{K+L}\right)\right]\right].
\end{align*}
The theorem follows immediately from this claim. The rest of the proof is about showing the claim.

Note that
\begin{align*}
    \bbE\left[h\left(o^1[n],\ldots,o^{K+L}[n]\right)\right]=\bbE\left[\bbE\left[\left.h\left(\frac{1}{\sqrt{n}}\bv_n^\top \phi^{1:K+L}\right)\right|\calG,\sigma_v^2\right]\right],
\end{align*}
where $\phi^{1:K+L}=\left[\phi\left(g_n\left(\bx_1;\bPsi_n\right)\right),\ldots,\phi\left(g_n\left(\bx_{K+L};\bPsi_n\right)\right)\right]\in\bbR^{n\times(K+L)}$. 
Conditioned on $\calG$ and $\sigma_v^2$, the random variable $\frac{1}{\sqrt{n}}\bv_n^\top\phi^{1:K+L}$ in the nested expectation from above is distributed by a multivariate Gaussian. The mean of this Gaussian distribution is:
\begin{align*}
    \bbE\left[\left.\frac{1}{\sqrt{n}}\bv_n^\top\phi^{1:K+L}\right|\calG,\sigma_v^2\right]&=\bbE\left[\left.\bv_n^\top\right|\calG,\sigma_v^2\right]\times \frac{1}{\sqrt{n}}\phi^{1:K+L}\\
    &=0\in\bbR^{K+L}.
\end{align*}
The covariance is:
\begin{align*}
    \precalK[n]_{(i,j)}&=\bbE\left[\left.\left(\frac{1}{\sqrt{n}}\bv_n^\top\phi\left(g_n\left(\bx_i;\bPsi_n\right)\right)\right)\left(\frac{1}{\sqrt{n}}\bv_n^\top\phi\left(g_n\left(\bx_j;\bPsi_n\right)\right)\right)\right|\calG,\sigma_v^2\right]\\
    &=\frac{1}{n}\sum_{\alpha=1}^n\sum_{\beta=1}^n\bbE\left[\left.\bv_{n,\alpha}\bv_{n,\beta}\right|\calG,\sigma_v^2\right]\times\left(\phi\left(g_n\left(\bx_i;\bPsi_n\right)\right)\right)_{\alpha}\left(\phi\left(g_n\left(\bx_j;\bPsi_n\right)\right)\right)_{\beta}\\
    &=\frac{1}{n}\sigma_v^2\sum_{\alpha = 1}^n \left(\phi\left(g_n\left(\bx_i;\bPsi_n\right)\right)\right)_{\alpha}\left(\phi\left(g_n\left(\bx_j;\bPsi_n\right)\right)\right)_{\alpha}.
\end{align*}
Using what we have observed so far, we compute the limit in our claimed convergence:
\begin{align*}
    \lim_{n\rightarrow\infty}\bbE\left[h\left(o^1[n],\ldots,o^{K+L}[n]\right)\right]&=\lim_{n\rightarrow\infty}\bbE\left[\bbE_{u^{1:K+L}\sim\calN\left(0,\precalK[n]\right)}\left[h\left(u^{1:K+L}\right)\right]\right]\\
    &=\bbE\left[\lim_{n\rightarrow\infty}\bbE_{u^{1:K+L}\sim\calN\left(0,\precalK[n]\right)}\left[h(u^{1:K+L})\right]\right].
\end{align*}
The last equality follows from the dominated convergence theorem and the fact that $h$ is bounded. To go further in our computation, we observe that $\precalK[n]$ converges almost surely to $\sigma_v^2\precalK$ by \cref{thm:master-theorem} and so
\begin{align*}
    \bbE_{u^{1:K+L}\sim\calN(0,\precalK[n])}[h(u^{1:K+L})]&=\int h(u^{1:K+L})\calN(u^{1:K+L};o,\precalK[n])\dee u^{1:K+L}\\
    &\xrightarrow{a.s.}\int h(u^{1:K+L})\calN(u^{1:K+L};0,\sigma_v^2\precalK)\dee u^{1:K+L}\\
    &=\bbE_{Y^{1:K+L}\sim\calN(0,\sigma_v^2\precalK)}\left[h\left(Y^{1:K+L}\right)\right].
\end{align*}
Here we use $\calN$ not just for the multivariate Gaussian distribution but also for its density, and derive the almost sure convergence from the dominated convergence theorem. Using this observation, we complete our computation:
\begin{align*}
    \bbE\left[\lim_{n\rightarrow\infty}\bbE_{u^{1:K+L}\sim\calN(0,\precalK[n])}\left[h(u^{1:K+L})\right]\right]&=\bbE\left[\bbE_{Y^{1:K+L}\sim\calN(0,\sigma_v^2\precalK)}\left[h\left(Y^{1:K+L}\right)\right]\right]\\
    &=\bbE_{\sigma_v^2\sim\calH}\left[\bbE_{Y^{1:K+L}\sim\calN(0,\sigma_v^2\precalK)}\left[h\left(Y^{1:K+L}\right)\right]\right].
\end{align*}
\end{proof}

\lastlayertrain*

Our proof of this theorem reuses the general structure of the proof of the corresponding theorem for \glspl{nngp} in~\citep{lee2019wide}, but it adjusts the structure slightly so as to use the master theorem for tensor programs and cover general architectures expressed by those programs. Another new important part of the proof is to condition on the random scale $\sigma_v^2$, so that the existing results such as the master theorem can be used in our case.

\begin{proof}
Note that $\bPsi_n^{(t)}=\bPsi_n^{(0)}$ for all $t$ because  $d \bPsi_n^{(t)}/dt =0$. 
Let
\begin{align*}
    \bar{\phi}_i[n] & =\frac{1}{\sqrt{n}}\phi\left(g_n\left(\bx_i;\bPsi_n^{(t)}\right)\right)=\frac{1}{\sqrt{n}}\phi\left(g_n\left(\bx_i;\bPsi_n^{(0)}\right)\right)\ \text{ for $i \in [K+L]$}, 
    \\
    \bar{\bPhi}_{\train}[n] & = \left[\bar{\phi}_1[n],\ldots,\bar{\phi}_K[n]\right]^\top \in \bbR^{K \times n},
    \qquad
    \bar{\bPhi}_{\test}[n] = \left[\bar{\phi}_{K+1}[n],\ldots,\bar{\phi}_{K+L}[n]\right]^\top \in \bbR^{L \times n}.
\end{align*}
Using this notation, we rewrite the differential equation for $\bv_n^{(t)}$ as follows:
\begin{align*}
    \frac{d\bar{\bv}_{n}^{(t)}}{dt} = -\frac{2}{K} \bar{\bPhi}_{\train}[n]^\top\left(\bar{\bPhi}_{\train}[n]\,\bar{\bv}_n^{(t)}-Y_{\train}\right).
\end{align*}
This ordinary differential equation has a closed-form solution, which we write below:
\begin{align*}
    \bar{\bv}_n^{(t)} = \bar{\bv}_n^{(0)}+\bar{\bPhi}_{\train}[n]^\top\left(\bar{\bPhi}_{\train}[n]\bar{\bPhi}_{\train}[n]^\top\right)^{-1}\left(\exp \left(-\frac{2}{K} t \bar{\bPhi}_{\train}[n]\bar{\bPhi}_{\train}[n]^\top\right)-I\right)\left(\bar{\bPhi}_{\train}[n]\,\bar{\bv}_n^{(0)}-Y_{\train}\right).
\end{align*}
We multiply both sides of the equation with the $\bar{\bPhi}_{\test}[n]$ matrix, and get
\begin{align*}
\bar{\bPhi}_{\test}[n]\,\bar{\bv}_n^{(t)} = \bar{\bPhi}_{\test}[n]\,\bar{\bv}_n^{(0)} + \precalK[n]_{\test,\train}\precalK[n]_{\train,\train}^{-1}\left(\exp \left(-\frac{2}{K} t\precalK[n]_{\train,\train}\right)-I\right)\left(\bar{\bPhi}_{\train}[n]\,\bar{\bv}_n^{(0)}-Y_{\train}\right)
\end{align*}
where $\precalK[n]_{\test,\train} = \bar{\bPhi}_{\test}[n]\bar{\bPhi}_{\train}[n]^\top$ and $\precalK[n]_{\train,\train} = \bar{\bPhi}_{\train}[n]\bar{\bPhi}_{\train}[n]^\top$. Then, we send $t$ to $\infty$, and obtain
\begin{align*}
\bar{\bPhi}_{\test}[n]\,\bar{\bv}_n^{(\infty)} 
& {} = \bar{\bPhi}_{\test}[n]\,\bar{\bv}_n^{(0)} - \precalK[n]_{\test,\train}\precalK[n]_{\train,\train}^{-1}\left(\bar{\bPhi}_{\train}[n]\,\bar{\bv}_n^{(0)}-Y_{\train}\right)
\\
& {} = 
\precalK[n]_{\test,\train}\precalK[n]_{\train,\train}^{-1}Y_{\train}
+
\bar{\bPhi}_{\test}[n]\,\bar{\bv}_n^{(0)}
 - \precalK[n]_{\test,\train}\precalK[n]_{\train,\train}^{-1}\left(\bar{\bPhi}_{\train}[n]\,\bar{\bv}_n^{(0)}\right).
\end{align*}
But 
\begin{align*}
    (f_n(\bx_{K+i};\bv_n^{(\infty)},\bPsi_n^{(\infty)}))_{i \in [L]} = 
    \bar{\bPhi}_{\test}[n]\,\bar{\bv}_n^{(\infty)}.
\end{align*}
Thus, we can complete the proof of the theorem if we show that for every bounded continuous function $h : \bbR^L \to \bbR$,
\begin{multline*}
\lim_{n \to \infty}
    \bbE\left[h\Bigg(
    \precalK[n]_{\test,\train}\precalK[n]_{\train,\train}^{-1}Y_{\train}
+
\bar{\bPhi}_{\test}[n]\,\bar{\bv}_n^{(0)} - \precalK[n]_{\test,\train}\precalK[n]_{\train,\train}^{-1}\left(\bar{\bPhi}_{\train}[n]\,\bar{\bv}_n^{(0)}\right)
    \Bigg)\right] 
    \\
    {} =
    \bbE\left[h\Big(
        (f^\infty_\infty(\bx_{K+i}))_{i \in [L]}
    \Big)\right].
\end{multline*}
Pick a bounded continuous function $h : \bbR^L \to \bbR$. For each $n$, let $\calG_n$ be the $\sigma$-field generated by $g_n(\bx_1;\bPsi_n),\ldots,g_n(\bx_{K+L};\bPsi_n)$. Also, define $\precalK[n]_{\train,\test} = \bar{\bPhi}_{\train}[n]\bar{\bPhi}_{\test}[n]^\top$.
We show the sufficient condition mentioned above:
\begin{align*}
&
\lim_{n \to \infty}
    \bbE\left[h\Bigg(
    \precalK[n]_{\test,\train}\precalK[n]_{\train,\train}^{-1}Y_{\train}
+
\bar{\bPhi}_{\test}[n]\,\bar{\bv}_n^{(0)} - \precalK[n]_{\test,\train}\precalK[n]_{\train,\train}^{-1}\left(\bar{\bPhi}_{\train}[n]\,\bar{\bv}_n^{(0)}\right)
    \Bigg)\right] 
\\
&
\quad
{} =
\lim_{n \to \infty}
    \bbE\left[\bbE\left[\left.h\Bigg(
    \precalK[n]_{\test,\train}\precalK[n]_{\train,\train}^{-1}Y_{\train}
+
\bar{\bPhi}_{\test}[n]\,\bar{\bv}_n^{(0)} - \precalK[n]_{\test,\train}\precalK[n]_{\train,\train}^{-1}\left(\bar{\bPhi}_{\train}[n]\,\bar{\bv}_n^{(0)}\right)
    \Bigg)\right| \calG_n,\sigma_v^2 \right]\right] 
\\
&
\quad
{} = 
\lim_{n \to \infty}
    \bbE\left[ \bbE_{Z \sim \calN(\precalK[n]_{\test,\train}\precalK[n]_{\train,\train}^{-1}Y_{\train},\,\sigma_v^2(\precalK[n]_{\test,\test} - \precalK[n]_{\test,\train}\precalK[n]_{\train,\train}^{-1}\precalK[n]_{\train,\test}))}\left[h(Z)\right]\right]
\\
&
\quad
{} = 
    \bbE\left[ 
    \lim_{n \to \infty}
    \bbE_{Z \sim \calN(\precalK[n]_{\test,\train}\precalK[n]_{\train,\train}^{-1}Y_{\train},\,\sigma_v^2(\precalK[n]_{\test,\test} - \precalK[n]_{\test,\train}\precalK[n]_{\train,\train}^{-1}\precalK[n]_{\train,\test}))}\left[h(Z)\right]\right]
\\
&
\quad
{} = 
    \bbE\left[ 
    \bbE_{Z \sim \calN(\precalK_{\test,\train}\precalK_{\train,\train}^{-1}Y_{\train},\,\sigma_v^2(\precalK_{\test,\test} - \precalK_{\test,\train}\precalK_{\train,\train}^{-1}\precalK_{\train,\test}))}\left[h(Z)\right]\right]
\\
&    
\quad
{} =
    \bbE\left[h\Big(
        (f^\infty_\infty(\bx_{K+i}))_{i \in [L]}
    \Big)\right].
\end{align*}
The second equality uses the fact that $\bv_n^{(0)}$ consists of iid samples from $\calN(0,\sigma_v^2)$, and it is a consequence of straightforward calculation. The third equality uses the dominated convergence theorem and the fact that $h$ is bounded. The fourth equality uses the master theorem for tensor programs (\cref{thm:master-theorem}), the boundedness of $h$, and the dominated convergence theorem. The last equality follows from the definition of $(f^\infty_\infty(\bx_{K+i}))_{i \in [L]}$.

If $\sigma_v^2\sim\InvGamma(a,b)$, we get the following closed-form marginal distribution by \cref{lem:marginal}:
\begin{align*}
    (f_{\infty}^{\infty}(\bx_{K+i}))_{i\in[L]}\sim \mathrm{MVT}_{L}\left(2a,\, \precalK_{\test,\train}\precalK_{\train,\train}^{-1}Y_{\train},\, \frac{b}{a}(\precalK_{\test,\test}-\precalK_{\test,\train}\precalK_{\train,\train}^{-1}\precalK_{\train,\test})\right).
\end{align*}
\end{proof}

\commentout{
\hy{The original proof by Hyunki. I leave it in this commented-out form just in case.}
\begin{proof}
Note first that because $\frac{d \bPsi_n^{(t)}}{dt}=0$, $\bPsi_n^{(t)}=\bPsi_n^{(0)}$ for all $t$.
Also, by \cref{thm:convergence-mixture-gps}, we have
$(f_\infty(\bx_i))_{i\in[K+L]} \sim \calN\left(0,\sigma_v^2\cdot\precalK\right)$ for $\sigma_v^2\sim \calH$.
Let $\sigma_v^2$ be a random variable drawn from distribution $\calH$.
Define
\begin{align*}
    \bar{\phi}_i & =\frac{\sigma_v}{\sqrt{n}}\phi\left(g_n\left(\bx_i;\bPsi_n^{(t)}\right)\right)=\frac{\sigma_v}{\sqrt{n}}\phi\left(g_n\left(\bx_i;\bPsi_n^{(0)}\right)\right)\ \text{ for $i \in [K+L]$}, 
    \\
    \bar{\Phi}_{\train} & = \left[\bar{\phi}_1,\ldots,\bar{\phi}_K\right]^\top,
    \qquad
    \bar{\Phi}_{\test} = \left[\bar{\phi}_{K+1},\ldots,\bar{\phi}_{K+L}\right]^\top.
\end{align*}
Then by \cref{thm:master-theorem}, $\bar{\phi}_i^\top\cdot\bar{\phi}_j$ converges almost surely to $\sigma_v^2\precalK_{(i,j)}$ for $i,j\in[K+L]$, as $n$ goes to $\infty$. 
Using the notation from above, we can express our training loss at time $t$ as follows:
\begin{align*}
    \calL = \frac{1}{K}\left\|\bar{\Phi}_{\train}\bar{\bv}_n^{(t)}-Y_{\train}\right\|_2^2
\end{align*}
where $\bar{\bv}_{n,\alpha}^{(0)} \sim \calN(0,1)$ for all $\alpha \in [n]$. If we fix all parameters except the last layer parameters and do gradient descent, we get
\begin{align*}
    \frac{d\bar{\bv}_{n}^{(t)}}{dt} = -\frac{2}{K} \bar{\Phi}_{\train}^\top\left(\bar{\Phi}_{\train}\bar{\bv}_n^{(t)}-Y_{\train}\right).
\end{align*}
With invertible $\bar{\Phi}_{\train}\bar{\Phi}_{\train}^\top$, we can calculate exact solution $\bar{\bv}_n^{(t)}$ of above ordinary differential equation which is
\begin{align*}
    \bar{\bv}_n^{(t)} = \bar{\bv}_n^{(0)}+\bar{\Phi}_{\train}^\top\left(\bar{\Phi}_{\train}\bar{\Phi}_{\train}^\top\right)^{-1}\left(\exp \left(-\frac{2}{K} t \bar{\Phi}_{\train}\bar{\Phi}_{\train}^\top\right)-I\right)\left(\bar{\Phi}_{\train}\bar{\bv}_n^{(0)}-Y_{\train}\right).
\end{align*}
Multiplying $\bar{\bv}_n^{(t)}$ and $\bar{\Phi}_{\test}$, we get
\begin{align*}
\bar{\Phi}_{\test}\bar{\bv}_n^{(t)} = \bar{\Phi}_{\test}\bar{\bv}_n^{(0)} + \precalK[n]_{\test,\train}\precalK[n]_{\train,\train}^{-1}\left(\exp \left(-\frac{2}{K} t\precalK[n]_{\train,\train}\right)-I\right)\left(\bar{\Phi}_{\train}\bar{\bv}_n^{(0)}-Y_{\train}\right)
\end{align*}
where $\precalK[n]_{\test,\train} = \bar{\Phi}_{\test}\bar{\Phi}_{\train}^\top$ and $\precalK[n]_{\train,\train} = \bar{\Phi}_{\train}\bar{\Phi}_{\train}^\top$.
And for $n\rightarrow\infty$, we also have
\begin{align*}
    \bar{\Phi}_{\test}\bar{\bv}_n^{(0)}\rightarrow\calN\left(0,\sigma_v^2\precalK_{\test,\test}\right)\text{ and }\bar{\Phi}_{\train}\bar{\bv}_n^{(0)}\rightarrow\calN\left(0,\sigma_v^2\precalK_{\train,\train}\right).
\end{align*}
And above two variables are initial output variable when the distribution of last layer node is $\calN(0,\sigma_v^2)$.
For $n\rightarrow\infty$, by \cref{thm:master-theorem}, we have
\begin{align*}
    (f_\infty^{(t)}(\bx_{K+i}))_{i\in[L]} =& (f_\infty^{(0)}(\bx_{K+i}))_{i\in[L]} \\
    &+ \precalK_{\test,\train}\precalK_{\train,\train}^{-1}\left(\exp \left(-\frac{2}{K} t\sigma_v^2\precalK_{\train,\train}\right)-I\right)\left((f_\infty^{(0)}\left(\bx_{i}\right))_{i\in[K]}-Y_{\train}\right).
\end{align*}
This result consist of stochastic part and non-stochastic part. Stochastic part is the linear transformation of initial output random vector $O^0$ which is
\begin{align*}
    \begin{bmatrix}
    \precalK_{\test,\train}\precalK_{\train,\train}^{-1}\left(\exp \left(-\frac{2}{K} \sigma_v^2t\precalK_{\train,\train}\right)-I\right)&I
    \end{bmatrix}(f_\infty^{(0)}(\bx_i))_{i\in[K+L]}.
\end{align*}
Non-stochastic part is the remaining part of equation which is
\begin{align*}
    \precalK_{\test,\train}\precalK_{\train,\train}^{-1}\left(I-\exp \left(-\frac{2}{K} \sigma_v^2t\precalK_{\train,\train}\right)\right)Y_{\train}.
\end{align*}
Thus if we sum them up, we have
\begin{align*}
    \mu_t &= \precalK_{\test,\train}\precalK_{\train,\train}^{-1}\left(I-\exp \left(-\frac{2}{K}\sigma_v^2 t\precalK_{\train,\train}\right)\right)Y_{\train}\\
    \sigma_t &= \sigma_v^2\left(\precalK_{\test,\test}-\precalK_{\test,\train}\precalK_{\train,\train}^{-1}\left(I-\exp\left(-\frac{4}{K}\sigma_v^2 \precalK_{\train,\train}t\right)\right)\precalK_{\train,\test}\right)\\
    (f_\infty^{(t)}(\bx_{K+i}))_{i\in[L]}&\sim \calN(\mu_t,\sigma_t).
\end{align*}
Thus $t\rightarrow\infty$, $(f_\infty^{(t)}(\bx_{K+i}))_{i\in[L]}$ converges in distribution to
\begin{align*}
\calN(\precalK_{\test,\train}\precalK_{\train,\train}^{-1}Y_{\train},\sigma_v^2(\precalK_{\test,\test}-\precalK_{\test,\train}\precalK_{\train,\train}^{-1}\precalK_{\train,\test})).
\end{align*}
Up to now, all the results were gained with conditioning on $\sigma_v^2$.
Thus by \cref{lem:convergeindistribution}, we can show that $t\rightarrow\infty$, $(f_\infty^{(t)}(\bx_{K+i}))_{i\in[L]}$ converges in distribution to $(f_\infty^{\infty}(\bx_{K+i}))_{i\in[L]}$ where
\begin{align*}
    \sigma_v^2&\sim\calH\\
    (f_\infty^{\infty}(\bx_{K+i}))_{i\in[L]}&\sim\calN\left(\precalK_{\test,\train}\precalK_{\train,\train}^{-1}Y_{\train},\sigma_v^2(\precalK_{\test,\test}-\precalK_{\test,\train}\precalK_{\train,\train}^{-1}\precalK_{\train,\test})\right).
\end{align*}
Furthermore if $\sigma_v^2\sim\InvGamma(a,b)$, we get closed form marginal distribution
\begin{align*}
    (f_{\infty}^{\infty}(\bx_{K+i}))_{i\in[L]}\sim \mathrm{MVT}_{L}\left(2a,\precalK_{\test,\train}\precalK_{\train,\train}^{-1}Y_{\train},\frac{b}{a}(\precalK_{\test,\test}-\precalK_{\test,\train}\precalK_{\train,\train}^{-1}\precalK_{\train,\test})\right).
\end{align*}
\end{proof}
}

Our proof of \cref{thm:fcn-general-training} uses the corresponding theorem for the standard \gls{ntk} setup in \citet{lee2019wide}. We restate this existing theorem in our setup using our notation:
\commentout{
\lyh{Let the \gls{bnn} model as \begin{align*}
    \bv_{n,\alpha}&\sim\calN(0,1) \text{ for }\alpha\in[n],\\
    \bPsi_{n,(\alpha,j)}&\sim\calN(0,\sigma_j^2/n)\text{ for }\alpha\in[n]\text{ and }j\in[P],\\
    f_n(\bx_i;\bv_n,\bPsi_n) &= \frac{1}{\sqrt{n}}\sum_{\alpha\in[n]}\bv_{n,\alpha}\cdot\phi(g_n(\bx_i;\bPsi_n))_\alpha \text{ for }i\in[K+L].
\end{align*}
Then we will denote $\bar{\Theta}$ as limit of neural tangent kernel of above \gls{bnn} model as $n$ tends to $\infty$:
\begin{align*}
    \Big\langle 
\nabla_{(\bv_n,\bPsi_n)} f_n(\bx_i;\bv_n,\bPsi_n),\, 
\nabla_{(\bv_n,\bPsi_n)} f_n(\bx_j;\bv_n,\bPsi_n)
\Big\rangle
\xrightarrow{a.s.}
\bar{\Theta}_{(i,j)}    
\
\end{align*}}}
\begin{thm}\label{thm:fcncorrespondence}
Assume that the \gls{bnn}s are multi-layer perceptrons, and we train these models using the training set $\calD_{\train}$ under the mean squared loss and infinitesimal step size. Regard $\sigma_v^2$ as a deterministic value. Then as $n\rightarrow\infty$ and $t\rightarrow\infty$, $(f_n(\bx_{K+i};\bv_n^{(t)},\frac{1}{\sqrt{n}}\bPsi_n^{(t)}))_{i\in[L]}$ converges in distribution to the following random variable:
\begin{align*}
    (f_\infty^{\infty}(\bx_{K+i}))_{i\in[L]}\sim \calN(\mu_{\test}^\infty,\Sigma_{\test}^\infty)
\end{align*}
where
\begin{align*}
    \mu_{\test}^\infty &= \Theta_{\test,\train}\Theta_{\train,\train}^{-1}Y_{\train},\\
    \Sigma_{\test}^\infty &= \calK_{\test,\test}+\Theta_{\test,\train}\Theta_{\train,\train}^{-1}\calK_{\train,\train}\Theta_{\train,\train}^{-1}\Theta_{\train,\test}-(\Theta_{\test,\train}\Theta_{\train,\train}^{-1}\calK_{\train,\test}+\calK_{\test,\train}\Theta_{\train,\train}^{-1}\Theta_{\train,\test}).
\end{align*}
\end{thm}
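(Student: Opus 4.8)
The plan is to prove this as an instance of the neural-tangent-kernel linearization theory for multi-layer perceptrons, following the argument of \citet{lee2019wide} but re-expressed in the present notation. Throughout, regard $\sigma_v^2$ as fixed, write $\theta = (\bv_n,\bPsi_n)$ for the full parameter vector, and abbreviate the training and test outputs along the flow as
\begin{align*}
F_\train^{(t)} &= \big(f_n(\bx_k;\bv_n^{(t)},\tfrac{1}{\sqrt{n}}\bPsi_n^{(t)})\big)_{k\in[K]}, &
F_\test^{(t)} &= \big(f_n(\bx_{K+i};\bv_n^{(t)},\tfrac{1}{\sqrt{n}}\bPsi_n^{(t)})\big)_{i\in[L]}.
\end{align*}
The first and central ingredient, which I would import wholesale from \citet{lee2019wide}, is that for MLPs under the NTK parameterization the empirical tangent kernel $\hat{\Theta}_n(\bx_i,\bx_j) = \langle \nabla_\theta f_n(\bx_i), \nabla_\theta f_n(\bx_j)\rangle$ converges almost surely to the deterministic limit $\Theta_{(i,j)}$ and, crucially, stays asymptotically constant along the entire gradient-flow trajectory, so that in the width limit the trained network coincides with its linearization about initialization.

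Granting this, the second step is to solve the resulting linear dynamics explicitly. Differentiating $f_n$ along the flow and substituting $\hat\Theta_n \to \Theta$ turns the parameter ODEs of the theorem statement into the closed function-space system $\dot F_\train^{(t)} = \tfrac{2}{K}\Theta_{\train,\train}\big(Y_\train - F_\train^{(t)}\big)$ and $\dot F_\test^{(t)} = \tfrac{2}{K}\Theta_{\test,\train}\big(Y_\train - F_\train^{(t)}\big)$. The training residual therefore decays as $F_\train^{(t)} - Y_\train = e^{-\frac{2}{K}\Theta_{\train,\train}t}\big(F_\train^{(0)} - Y_\train\big)$, and integrating the test equation gives
\begin{align*}
F_\test^{(t)} = F_\test^{(0)} - \Theta_{\test,\train}\Theta_{\train,\train}^{-1}\big(I - e^{-\frac{2}{K}\Theta_{\train,\train}t}\big)\big(F_\train^{(0)} - Y_\train\big).
\end{align*}
Since $\Theta_{\train,\train}$ is positive definite, the matrix exponential vanishes as $t\to\infty$, leaving the affine limit $F_\test^{(\infty)} = F_\test^{(0)} + \Theta_{\test,\train}\Theta_{\train,\train}^{-1}\big(Y_\train - F_\train^{(0)}\big)$.

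The third step identifies the limiting law. By \cref{cor:standard-nngp-convergence} the initialization outputs $(F_\train^{(0)}, F_\test^{(0)})$ converge in distribution to a zero-mean jointly Gaussian vector whose covariance is the NNGP kernel $\calK$ with blocks $\calK_{\train,\train},\calK_{\train,\test},\calK_{\test,\test}$; since $F_\test^{(\infty)}$ is an affine image of this vector, it is Gaussian. Setting $A = \Theta_{\test,\train}\Theta_{\train,\train}^{-1}$, the zero-mean initialization terms make $\bbE[F_\test^{(\infty)}] = A\,Y_\train = \mu_{\test}^{\infty}$, and the covariance of the random part $F_\test^{(0)} - A F_\train^{(0)}$ expands to $\calK_{\test,\test} - A\calK_{\train,\test} - \calK_{\test,\train}A^\top + A\calK_{\train,\train}A^\top$, which upon substituting $A$ is exactly $\Sigma_{\test}^{\infty}$. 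A short argument exchanging the $n\to\infty$ and $t\to\infty$ limits (using boundedness of the test prediction map and dominated convergence, as in the proof of \cref{thm:lastlayertrain}) then upgrades these computations to convergence in distribution.

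The main obstacle is entirely in the first ingredient: rigorously establishing that the linearization is valid uniformly in $t$ and that the empirical NTK does not drift during training as $n\to\infty$. This requires controlling the operator norm of the network Hessian and the parameter displacement $\|\theta_t - \theta_0\|$ along the whole trajectory, and it is precisely here that the restriction to MLPs (rather than arbitrary tensor-program architectures) enters, since the needed uniform bounds are only available in that setting. Because \citet{lee2019wide} already carry out this analysis, I would cite their result rather than reprove the concentration estimates, and devote the write-up to the deterministic linear-algebra and Gaussian-conditioning steps above.
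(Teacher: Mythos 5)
Your derivation is mathematically sound, but it is worth knowing that the paper does not prove this statement at all: \cref{thm:fcncorrespondence} is introduced with the sentence ``We restate this existing theorem in our setup using our notation,'' i.e., it is imported wholesale from \citet{lee2019wide} as a black box, and its only role in the paper is to be conditioned on $\sigma_v^2$ and combined with \cref{lem:convergeindistribution} and \cref{lem:marginal} in the proof of \cref{thm:fcn-general-training}. What you have written is, in effect, a reconstruction of the proof inside \citet{lee2019wide} itself: you cite only the hard concentration ingredient (constancy of the empirical tangent kernel along the trajectory and validity of the linearization uniformly in $t$, which is their Theorem~2.1/H.1 and does require the MLP-specific bounds on parameter displacement and the assumption that $\Theta_{\train,\train}$ has smallest eigenvalue bounded away from zero), and then you carry out the remaining steps explicitly --- the linear function-space ODEs, the solution $F_\test^{(t)} = F_\test^{(0)} - \Theta_{\test,\train}\Theta_{\train,\train}^{-1}\bigl(I - e^{-\frac{2}{K}\Theta_{\train,\train}t}\bigr)\bigl(F_\train^{(0)} - Y_\train\bigr)$, the $t\to\infty$ limit, and the affine-Gaussian computation giving $\mu_\test^\infty$ and $\Sigma_\test^\infty$ (your covariance expansion with $A = \Theta_{\test,\train}\Theta_{\train,\train}^{-1}$ checks out against the stated blocks). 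The trade-off is clear: the paper's citation-only treatment buys brevity and cleanly isolates the novelty of \cref{thm:fcn-general-training} in the conditioning-on-$\sigma_v^2$ trick, while your finer-grained version buys a self-contained account of where the formulas come from and makes explicit the two points the citation hides --- positive definiteness of $\Theta_{\train,\train}$ and the interchange of the $n\to\infty$ and $t\to\infty$ limits, which your appeal to dominated convergence alone does not fully settle (the uniform-in-$t$ control is part of what \citet{lee2019wide} prove via Gr\"{o}nwall-type arguments, so your deferral there is appropriate and necessary).
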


\generaltrain*


\begin{proof}
Assume that we condition on $\sigma^2_v$. Then, by \cref{thm:fcncorrespondence}, as we send $n$ to $\infty$ and then $t$ to $\infty$, the conditioned random variable $(f_n(\bx_{K+i};\bv_n^{(t)},\frac{1}{\sqrt{n}}\bPsi_n^{(t)}))_{i\in[L]} \mid \sigma^2_v$ converges in distribution to the following random variable:
\begin{align*}
    (f_\infty^{\infty}(\bx_{K+i}))_{i\in[L]} \mid \sigma^2_v \sim\calN(\mu_{\test}^\infty,\Sigma_{\test}^\infty),
\end{align*}
where
\begin{align*}
    \mu_{\test}^{\infty} & {} =  \bar{\Theta}_{\test,\train}\bar{\Theta}_{\train,\train}^{-1} Y_\test,\\
    \Sigma_{\test}^{\infty} &{} = \sigma_v^2\Big(\precalK_{\test,\test}+\bar{\Theta}_{\test,\train}\bar{\Theta}_{\train,\train}^{-1}\precalK_{\train,\train}\bar{\Theta}_{\train,\train}^{-1}\bar{\Theta}_{\train,\test} -(\bar{\Theta}_{\test,\train}\bar{\Theta}_{\train,\train}^{-1}\precalK_{\train,\test}+\precalK_{\test,\train}\bar{\Theta}_{\train,\train}^{-1}\bar{\Theta}_{\train,\test})\Big).
\end{align*}
Thus, by \cref{lem:convergeindistribution}, if we remove the conditioning on $\sigma_v^2$ (i.e., we marginalize over $\sigma_v^2$), we get the convergence of the unconditioned $(f_n(\bx_{K+i};\bv_n^{(t)},\frac{1}{\sqrt{n}}\bPsi_n^{(t)}))_{i\in[L]}$ to the following random variable:
\begin{align*}
    \sigma_v^2 & \sim\calH, 
    &
    (f_\infty^\infty(\bx_{K+i}))_{i \in [L]} & 
    \sim 
    \calN(\mu', \sigma_v^2\Theta')
\end{align*}
where
\begin{align*}
    \mu' &{} = \bar{\Theta}_{\test,\train}\bar{\Theta}_{\train,\train}^{-1} Y_\train,
    \\
    \Theta' & {} =\precalK_{\test,\test}
    +\Big(\bar{\Theta}_{\test,\train}\bar{\Theta}_{\train,\train}^{-1}\precalK_{\train,\train}\bar{\Theta}_{\train,\train}^{-1}\bar{\Theta}_{\train,\test}\Big)
    -\Big(\bar{\Theta}_{\test,\train}\bar{\Theta}_{\train,\train}^{-1}\precalK_{\train,\test}+
        \precalK_{\test,\train}\bar{\Theta}_{\train,\train}^{-1}\bar{\Theta}_{\train,\test}\Big).
\end{align*}
In particular, when $\calH = \InvGamma(a, b)$, the exact marginal distribution of $(f_\infty^\infty(\bx_{K+i}))_{i \in [L]}$ 
has the following form by \cref{lem:marginal}:
\begin{align*}
    \MVT\left(2\alpha, \mu', \frac{b}{a}\Theta'\right).
\end{align*}
\end{proof}

\begin{lem}\label{lem:convergeindistribution}
Let $(X_n)_{n \in \bbN}$ be a sequence of $p$-dimensional random vectors, $X$ be a $p$-dimensional random variable, and $Y$ be a random variable. If $X_n \mid Y$ converges in distribution to $X \mid Y$ almost surely (with respect to the randomness of $Y$) as $n$ tends to $\infty$, then $X_n$ converges in distribution to $X$.
\end{lem}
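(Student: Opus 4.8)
The plan is to deduce the unconditional weak convergence from the conditional one by integrating out $Y$, with the only tools being the tower property of conditional expectation and the dominated convergence theorem. Recall that $X_n$ converges in distribution to $X$ precisely when $\bbE[h(X_n)] \to \bbE[h(X)]$ for every bounded continuous $h:\bbR^p\to\bbR$. So I would fix one such $h$, say with $|h|\le B$, and reduce the whole statement to proving this single scalar limit.

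First I would make the hypothesis precise. Since $X_n$ and $X$ take values in the Polish space $\bbR^p$, regular conditional distributions exist: write $\mu_n$ and $\mu$ for the (random) laws of $X_n$ and of $X$ given $Y$, viewed as measurable functions of $Y$. The assumption that $X_n\mid Y$ converges in distribution to $X\mid Y$ almost surely then reads $\mu_n \Rightarrow \mu$ for almost every realization of $Y$. Weak convergence of these conditional laws implies, for the fixed bounded continuous $h$ above,
\[
\bbE[h(X_n)\mid Y] = \int h\,\dee\mu_n \xrightarrow{a.s.} \int h\,\dee\mu = \bbE[h(X)\mid Y]
\qquad\text{as } n\to\infty.
\]
Crucially, I invoke this for one $h$ at a time, so the familiar worry that the exceptional null set might depend on the test function never enters.

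Next I would integrate over $Y$. By the tower property $\bbE[h(X_n)] = \bbE\big[\bbE[h(X_n)\mid Y]\big]$, and the integrands are all bounded in absolute value by the constant $B$, which is trivially integrable and serves as a dominating function. Hence the dominated convergence theorem lets me move the almost-sure limit through the outer expectation:
\[
\bbE[h(X_n)] = \bbE\big[\bbE[h(X_n)\mid Y]\big] \longrightarrow \bbE\big[\bbE[h(X)\mid Y]\big] = \bbE[h(X)].
\]
As $h$ was an arbitrary bounded continuous function, this is exactly the claim that $X_n$ converges in distribution to $X$.

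The only genuinely delicate step is the first displayed implication, i.e.\ turning ``conditional laws converge weakly, a.s.'' into ``conditional expectations of a fixed bounded continuous function converge, a.s.''; phrasing the hypothesis through regular conditional distributions as above makes it a one-line consequence of the definition of weak convergence, which is why I would set up that language at the outset. If one prefers to sidestep regular conditional distributions entirely, the same tower-property-plus-DCT computation run on the bounded continuous functions $x\mapsto e^{\mathrm{i}\,t^\top x}$ yields $\bbE[e^{\mathrm{i}\,t^\top X_n}]\to\bbE[e^{\mathrm{i}\,t^\top X}]$ for every $t\in\bbR^p$, whence L\'evy's continuity theorem delivers the same conclusion.
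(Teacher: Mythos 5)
Your proof is correct and takes essentially the same route as the paper's: fix one bounded continuous test function $h$, write $\bbE[h(X_n)]=\bbE[\,\bbE[h(X_n)\mid Y]]$ by the tower property, and push the almost-sure limit of the inner conditional expectations through the outer expectation via dominated convergence with the constant bound on $h$. Your extra care in formalizing the hypothesis through regular conditional distributions (and the characteristic-function alternative) only makes explicit what the paper's proof leaves implicit.
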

\begin{proof}
Let $h : \bbR^p \to \bbR$ be a bounded continuous function. We have to show that
\begin{align*}
\lim_{n \to \infty} \bbE[h(X_n)] = \bbE[h(X)].
\end{align*}
The following calculation shows this equality:
\begin{align*}
\lim_{n \to \infty} \bbE[h(X_n)] 
 = \lim_{n \to \infty} \bbE[\,\bbE[h(X_n) \mid Y]] 
 = \bbE\left[\lim_{n \to \infty} \bbE[h(X_n) \mid Y]\right] 
& {} = \bbE[\,\bbE[h(X) \mid Y]] 
\\
& {} = \bbE[h(X)].
\end{align*}
The first and last equalities follow from the standard tower law for conditional expectation, the
second equality uses the dominated convergence theorem and the boundness of $h$, and the third equality
uses the fact that $X_n \mid Y$ converges in distribution to $X | Y$ almost surely.
\end{proof}
\commentout{
\hy{The old proof of Hyunki. It assumes the existence of pdfs, which is not necessary. So I replaced it by the one from above.}
\begin{proof}
First denote $f_{X_n}, f_X, f_Y$ as the pdfs of $X_n,X,Y$, respectively. And denote $F_{X_n}, F_X$ as the cdfs of $X_n, X$, respectively.
$\forall c = (c_1,\ldots, c_p) \in \bbR^p$, we have
\begin{align*}
    \bbP(X_n\leq c) &= \int_\calX I(x\leq c)f_{X_n}(x)\dee x\\
    &=\int_{\calX}\int_{\calY}I(x\leq c)f_{X_n,Y}(x,y)\dee y\dee x.
\end{align*}
Then by Fubini's theorem,
\begin{align*}
    \bbP(X_n\leq c) &= \int_\calY\int_{\calX}I(x\leq c)f_{X_n,Y}(x,y)\dee x\dee y\\
    &=\int_\calY\int_\calX I(x\leq c)f_{X|(Y=y)}(x|y)\cdot f_Y(y)\dee x \dee y\\
    &=\int_\calY\left(\int_{\calX}I(x\leq c)f_{X_n|(Y=y)}(x|y)\dee x\right) f_Y(y)\dee y.
\end{align*}
Now, if we assume $n\rightarrow\infty$,
\begin{align*}
    \lim_{n\rightarrow\infty} \bbP(X_n\leq c) &= \int_\calY \lim_{n\rightarrow\infty}\int_{\calX}I(x\leq c)f_{X_n|(Y=y)}(x|y)\dee x f_Y(y)\dee y\\
    &(\text{By Bounded Convergence Theorem and $|F_{X_n|(Y=y)}|\leq 1$})\\
    &=\int_{\calY}F_{X|(Y=y)}(c)f_Y(y)\dee y\\
    &\text{ ($\because n\rightarrow\infty X_n|(Y=y)\xrightarrow{d}X|(Y=y)$)}\\
    &= \bbP(X\leq c).
\end{align*}
Thus the sequence of marginal distribution $X_n$ converges in distribution to marginal distribution $X$.
\end{proof}
}
\section{Stochastic Variational Gaussian Process}
\label{sec:svgp}
Let $(X,\by)$ be training points and $(X_*,y_*)$ be test points. And let Z be inducing points. We want to make $q(f,f_Z)$ which well approximates $p(f,f_Z|y)$ by maximizing ELBO. Then we will construct $q(f,f_Z) = p(f|f_Z)q(f_Z)$ where $p(f|f_Z) = \calN(f|K_{XZ}K_{ZZ}^{-1}f_Z,K_{XX}-K_{XZ}K_{ZZ}^{-1}K_{ZX})$ and $q(f_Z) = \calN(f_Z|\mu,\Sigma)$. Then
\begin{align*}
    \log p(y) \geq \bbE_{f,f_Z\sim q(f,f_Z)}[\log p(y|f,f_Z)] - \mathrm{KL}(q(f_Z)||p(f_Z)).
\end{align*}
Here we can calculate likelihood term as
\begin{align*}
    \int \log p(y|f) \calN(f; A\mu, A\Sigma A^\top + D) \dee f
\end{align*}
where $A=K_{XZ}K_{ZZ}^{-1}$, $B = K_{XX}-K_{XZ}K_{ZZ}^{-1}K_{ZX}$.\\
Also we can calculate KL term as 
\begin{align*}
    \frac{1}{2}\left[\log (\frac{\det K_{ZZ}}{\det \Sigma})- n_Z + \text{Tr}\{K_{ZZ}^{-1}\Sigma\} +\mu^\top K_{ZZ}^{-1}\mu\right].
\end{align*}
We find optimal $\mu, \Sigma$ and inducing points $Z$ by using stochastic gradient descent. And with these optimal values, we can calculate predictive distribution $p(f_*|y)$.
\begin{align*}
    p(f_*|y) &= \int \int p(f_*,f,f_Z|y)\dee f\dee f_Z\\
    &=\int \int p(f_*|f,f_Z) q(f,f_Z) \dee f \dee f_Z\\
    &=\int \int p(f_*|f,f_Z) p(f|f_Z) q(f_Z) \dee f \dee f_Z\\
    &=\int p(f_*|f_Z) q(f_Z) \dee f_Z.
\end{align*}
This equation can be write as $p(f_*|y)=\calN(f_*;K_{*Z}K_{ZZ}^{-1}\mu,K_{*Z}K_{ZZ}^{-1}\Sigma(K_{*Z}K_{ZZ}^{-1})^\top+K_{**}-K_{*Z}K_{ZZ}^{-1}K_{*Z}^\top)$.\\
Now if we use reparametrization trick at $f_Z$, we can write $f_Z=Lu$, where $L$ is the lower triangular matrix from the cholesky decomposition of the matrix $K_{ZZ}$. Then
\begin{align*}
    p(f_Z) &= \calN(f_Z;0,LL^\top) = \calN(f_Z;0,K_{ZZ})\\
    q(f_Z) &= \calN(f_Z;L\mu_u, L\Sigma_u L^\top).
\end{align*}
In this case ELBO changes into
\begin{align*}
    \int \log(p(y|f))\calN(f;AL\mu_u, AL\Sigma_uL^\top A^\top + B)\dee f - \mathrm{KL}(\calN(u;\mu_u,\Sigma_u)||\calN(u;0,I)).
\end{align*}
With softmax likelihood, we can use MC approximation which is
\begin{align*}
    \text{ELBO} \simeq \frac{1}{T} \frac{N}{B} \sum_{t=1}^T\sum_{i=1}^B \log p(y_i|f_i^{(t)})-\mathrm{KL}(\calN(u;\mu_u,\Sigma_u)||\calN(u;0,I))
\end{align*}
where $B$ is minibatch and $T$ is sample number of $f$ which sampled from
\begin{align*}
    \calN(f;A_BL\mu_u, A_BL\Sigma_uL^\top A_B^\top + B_B).
\end{align*}
Here $A_B=K_{X_BZ}K_{ZZ}^{-1}$ and $B_B=K_{X_BX_B}-K_{X_BZ}K_{ZZ}^{-1}K_{ZX_B}$. 
Finally we can calculate predictive distribution with this reparametrization trick.
\begin{align*}
    p(f_*|y)=&\int p(f_*|f_Z)q(f_Z)\dee f_Z\\
    =&\int \calN(f_*;K_{*Z}K_{ZZ}^{-1}f_Z, K_{**}-K_{*Z}K_{*Z}^{-1}K_{Z*})\calN(f_Z;L\mu_u, L\Sigma_u L^\top)\dee f_Z\\
    =&\calN(f_*;K_{*Z}K_{ZZ}^{-1}L\mu_u,K_{*Z}K_{ZZ}^{-1}L\Sigma_u L^\top(K_{*Z}K_{ZZ}^{-1})^\top+K_{**}-K_{*Z}K_{ZZ}^{-1}K_{*Z}^\top)
\end{align*}
Thus the final predictive distribution $p(y_*|x_*)$ is
\begin{align*}
    \log p(y_*|x_*) & = \log \prod_j^M p(y_*^j|x_*^j)\\
    &= \sum_j^M \log p(y_*^j|x_*^j)\\
    &= \sum_j^M \log \int p(y_*^j|f_*)p(f_*|y)\dee f_*\\
    &\simeq \sum_j^M \log \frac{1}{N}\sum_i^N p(y_*^j|f_*^{i})
\end{align*}
where $f_*^i$s are sampled from $\calN(f_*;K_{*Z}K_{ZZ}^{-1}L\mu_u,K_{*Z}K_{ZZ}^{-1}L\Sigma_u L^\top(K_{*Z}K_{ZZ}^{-1})^\top+K_{**}-K_{*Z}K_{ZZ}^{-1}K_{*Z}^\top)$.
\section{Stochastic Variational student $t$ Process}
\label{sec:svtp}
We want to make $q(f,f_Z,\sigma^2)$ which well approximates $p(f,f_Z,\sigma^2|y)$ by maximizing ELBO. 
First $p(f,f_Z,\sigma^2)=p(f|f_Z,\sigma^2)p(f_Z|\sigma^2)p(\sigma^2)$ where $p(f|f_Z,\sigma^2) = \calN(f|K_{XZ}K_{ZZ}^{-1}f_Z,\sigma^2(K_{XX}-K_{XZ}K_{ZZ}^{-1}K_{ZX}))$, $p(f_Z|\sigma^2)=\calN(f_Z|0,\sigma_2K_{ZZ})$ and $p(\sigma^2)=\Gamma^{-1}(\sigma^2|\alpha,\beta)$. 
And $q(f,f_Z,\sigma^2)=p(f|f_Z,\sigma^2)q(f_Z|\sigma^2)q(\sigma^2)$ where $q(f_Z|\sigma^2)=\calN(f_Z|\mu,\sigma^2\Sigma)$ and $q(\sigma^2)=\Gamma^{-1}(\sigma^2|a,b)$. 
Then ELBO can be computed as 
\begin{align*}
    \log p(y) \geq \bbE_{f,f_Z,\sigma^2\sim q(f,f_Z,\sigma^2)}[\log p(y|f,f_Z,\sigma^2)]-\mathrm{KL}(q(f,f_Z,\sigma^2)||p(f,f_Z,\sigma^2)).
\end{align*}
Here KL divergence can be reformulated as
\begin{align*}
    \mathrm{KL}(q(f,f_Z,\sigma^2)||p(f,f_Z,\sigma^2))&=\int\int q(f_Z,\sigma^2)\log \frac{q(f_Z,\sigma^2)}{p(f_Z,\sigma^2)}\dee f_Z \dee \sigma^2\\
    &=\mathrm{KL}(q(f_Z,\sigma^2)||p(f_Z,\sigma^2)).
\end{align*}
Now let's compute likelihood part first.
\begin{align*}
    \bbE_{f,f_Z,\sigma^2\sim q(f,f_Z,\sigma^2)}[\log p(y|f,f_Z,\sigma^2)] = \int q(f)\log p(y|f) \dee f
\end{align*}
where $q(f)=\int\int p(f|f_Z,\sigma^2)q(f_Z|\sigma^2)q(\sigma^2)\dee f_Z\dee \sigma^2$. And this is
\begin{align*}
    q(f) = \mathrm{MVT}(f|2a,K_{XZ}K_{ZZ}^{-1}\mu, \frac{b}{a}(K_{XZ}K_{ZZ}^{-1}\Sigma K_{ZZ}^{-1}K_{ZX}+K_{XX}-K_{XZ}K_{ZZ}^{-1}K_{ZX})).
\end{align*}
By slightly modifying the KL divergence between normal-gamma distributions in \citet{soch2016kullback}, we get
\begin{align*}
    \mathrm{KL}(q(f_Z,\sigma^2)||p(f_Z,\sigma^2)) &= \int\int q(f_Z,\sigma^2)\log \frac{q(f_Z,\sigma^2)}{p(f_Z,\sigma^2)}\dee f_Z \dee \sigma^2\\
    &= \frac{1}{2}\frac{a}{b}\mu^\top K_{ZZ}^{-1}\mu + \frac{1}{2}\mathrm{Tr}(K_{ZZ}^{-1}\Sigma)+\frac{1}{2}\log \frac{|K_{ZZ}|}{|\Sigma|}-\frac{n_Z}{2}+\alpha \log \frac{b}{\beta} \\&- \log \frac{\Gamma(a)}{\Gamma(\alpha)}+(a-\alpha)\psi (a) +(\beta -b)\frac{a}{b}
\end{align*}
where $\psi(\cdot)$ is digamma function.
Now if we use reparametrization trick at $f_Z$, we can write $f_Z=Lu$, where $L$ is the lower triangular matrix from the cholesky decomposition of the matrix $K_{ZZ}$. Then
\begin{align*}
    p(f_Z|\sigma^2) &= \calN(f_Z;0,\sigma^2LL^\top) = \calN(f_Z;0,\sigma^2K_{ZZ})\\
    q(f_Z|\sigma^2) &= \calN(f_Z;L\mu_u, \sigma^2L\Sigma_u L^\top).
\end{align*}
In this case ELBO changes into
\begin{align*}
    &\int \log(p(y|f))\mathrm{MVT}(f;2a,AL\mu_u, \frac{b}{a}(AL\Sigma_uL^\top A^\top + B))\dee f\\
    &- \mathrm{KL}(\calN(u;\mu_u,\sigma^2\Sigma_u)\Gamma^{-1}(\sigma^2;a,b)||\calN(u;0,\sigma^2I)\Gamma^{-1}(\sigma^2,\alpha,\beta)).
\end{align*}
With softmax likelihood, we can use MC approximation which is
\begin{align*}
    \text{ELBO} \simeq &\frac{1}{T} \frac{N}{B} \sum_{t=1}^T\sum_{i=1}^B \log p(y_i|f_i^{(t)})\\
    &- \mathrm{KL}(\calN(u;\mu_u,\sigma^2\Sigma_u)\Gamma^{-1}(\sigma^2;a,b)||\calN(u;0,\sigma^2I)\Gamma^{-1}(\sigma^2,\alpha,\beta))
\end{align*}
where $B$ is minibatch and $T$ is sample number of $f$ which sampled from
\begin{align*}
    \mathrm{MVT}(f;2a,A_BL\mu_u, \frac{b}{a}(A_BL\Sigma_uL^\top A_B^\top + B_B)).
\end{align*}
Here $A_B=K_{X_BZ}K_{ZZ}^{-1}$ and $B_B=K_{X_BX_B}-K_{X_BZ}K_{ZZ}^{-1}K_{ZX_B}$. 
Finally we can calculate predictive distribution with this reparametrization trick.
\begin{align*}
    p(f_*|y)=&\int \int p(f_*|f_Z,\sigma^2)q(f_Z|\sigma^2)q(\sigma^2)\dee f_Z\dee \sigma^2\\
    =&\int\int \calN(f_*;K_{*Z}K_{ZZ}^{-1}f_Z, \sigma^2(K_{**}-K_{*Z}K_{*Z}^{-1}K_{Z*}))\calN(f_Z;L\mu_u, \sigma^2 (L\Sigma_u L^\top))\Gamma^{-1}(a,b)\dee f_Z\dee \sigma^2\\
    =&\mathrm{MVT}(f_*;2a,K_{*Z}K_{ZZ}^{-1}L\mu_u,\frac{b}{a}(K_{*Z}K_{ZZ}^{-1}L\Sigma_u L^\top(K_{*Z}K_{ZZ}^{-1})^\top+K_{**}-K_{*Z}K_{ZZ}^{-1}K_{*Z}^\top)).
\end{align*}
Thus the final predictive distribution $p(y_*|x_*)$ is
\begin{align*}
    \log p(y_*|x_*) & = \log \prod_j^M p(y_*^j|x_*^j)\\
    &= \sum_j^M \log p(y_*^j|x_*^j)\\
    &= \sum_j^M \log \int p(y_*^j|f_*)p(f_*|y)\dee f_*\\
    &\simeq \sum_j^M \log \frac{1}{N}\sum_i^N p(y_*^j|f_*^{i})
\end{align*}
where $f_*^i$s are sampled from $\mathrm{MVT}(f_*;2a,K_{*Z}K_{ZZ}^{-1}L\mu_u,\frac{b}{a}(K_{*Z}K_{ZZ}^{-1}L\Sigma_u L^\top(K_{*Z}K_{ZZ}^{-1})^\top+K_{**}-K_{*Z}K_{ZZ}^{-1}K_{*Z}^\top))$.
\section{Inference Algorithms}
\label{sec:inferencealgorithm}
\subsection{Inference for Regression}
In order to calculate the predictive posterior distribution for an intractable marginal distribution, we can use self-normalized importance sampling. Consider a scale mixture of \gls{nngp}s with a prior $\calH$ on the variance $\sigma_v^2$. Assume that we would like to estimate the expectation of $h(y)$ for some function $h:\bbR\rightarrow\bbR$ where the random variable $y$ is drawn from the predictive posterior of the mixture at some input $\bx \in \bbR^I$ under the condition on $\calD_{\train}$. Then, the expectation of $h(y)$ is 
\begin{align*}
    \bbE[h(y)] &= \int h(y)p(y|\calD_{\train})\dee y\\
    &=\int\int h(y)p(y|\calD_{\train},\sigma_v^2)p(\sigma_v^2|\calD_{\train})\dee \sigma_v^2\dee y\\
    &=\int\int h(y)p(y|\calD_{\train},\sigma_v^2)\frac{p(\sigma_v^2,Y_{\train}|X_{\train})}{p(Y_{\train}|X_{\train})}\dee \sigma_v^2\dee y\\
    &=\frac{1}{Z}\int\int h(y)p(y|\calD_{\train},\sigma_v^2)\frac{p(\sigma_v^2,Y_{\train}|X_{\train})}{p(\sigma_v^2)}p(\sigma_v^2)\dee \sigma_v^2\dee y
\end{align*}
where $Z=\int p(\sigma_v^2,Y_{\train}|X_{\train}) \dee\sigma_v^2$. We can approximate $Z$ as follows:
\begin{align*}
    Z &= \int \frac{p(\sigma_v^2,Y_{\train}|X_{\train})}{p(\sigma_v^2)}p(\sigma_v^2)\dee \sigma_v^2\\
    &\simeq \frac{1}{N}\sum_{i=1}^N\frac{p(\beta_i,Y_{\train}|X_{\train})}{p(\beta_i)}\\
    &= \frac{1}{N}\sum_{i=1}^N p(Y_{\train}|X_{\train},\beta_i)
\end{align*}
where $\beta_i$s are sampled independently from $\calH$. Using this approximation, we can also approximate expectation of $h(y)$ as follows:
\begin{align*}
    \bbE[h(y)] &= \frac{1}{Z}\int\int h(y)p(y|\calD_{\train},\sigma_v^2)\frac{p(\sigma_v^2,Y_{\train}|X_{\train})}{p(\sigma_v^2)}p(\sigma_v^2)\dee \sigma_v^2\dee y\\
    &\simeq \sum_{i=1}^N\frac{w_i}{\sum_{j=1}^Nw_j}h(y_i)
\end{align*}
where the $y_i$'s are sampled from the posterior of the Gaussian distribution and the $w_i$'s are the corresponding importance weights for all $i\in[N]$:
\begin{align*}
    \beta_i & \sim \calH,
    &
    w_i & = \calN(Y_\train ; 0, \beta_i \precalK_{\train,\train}),
    & 
    y_i & \sim \calN(\precalK_{\bx,\train} \precalK_{\train,\train}^{-1} Y_\train,\, \beta_i(\precalK_{\bx,\bx} - \precalK_{\bx,\train}\precalK_{\train,\train}^{-1}\precalK_{\train,\bx})).
\end{align*}
The $\precalK$ is the covariance matrix computed with test input as in Theorem~\ref{thm:convergence-mixture-gps}. 
To speed up calculation, we removed duplicated calculations in our importance weights and also during the sampling of the $y_i$'s. 
First, for importance weights, we observe that the log likelihood of $\beta_i$ has the form:
\begin{align*}
    \log \calN(Y_{\train};0,\beta_i\precalK_{\train,\train}) = -\frac{K}{2}\log(2\pi)-\frac{1}{2}\log\det (\precalK_{\train,\train})-\frac{K}{2}\log(\beta_i)-\frac{1}{2\beta_i}Y_{\train}^\top\precalK_{\train,\train}^{-1}Y_{\train},
\end{align*}
and the three terms $\frac{K}{2}\log(2\pi)$, $\frac{1}{2}\log\det(\calK_{\train,\train})$, and $Y_{\train}^\top\precalK_{\train,\train}^{-1}Y_{\train}$ here are independent with $\beta_i$. 
Thus, using this observation, we compute these three terms beforehand only once, and calculate the 
log likelihood of each sample $\beta_i$ in $O(1)$ time.
Next, for sampling the $y_i$'s, we first draw $N$ samples $\bar{y}_i$s from $\calN(0,\, \precalK_{\bx,\bx} - \precalK_{\bx,\train}\precalK_{\train,\train}^{-1}\precalK_{\train,\bx})$, then multiply each of these samples with $\sqrt{\beta_i}$, and finally add $\precalK_{\bx,\train} \precalK_{\train,\train}^{-1} Y_\train$ to each of the results. Since $\precalK_{\bx,\train} \precalK_{\train,\train}^{-1} Y_\train + \sqrt{\beta_i}\bar{y}_i\sim\calN(\precalK_{\bx,\train} \precalK_{\train,\train}^{-1} Y_\train,\, \beta_i(\precalK_{\bx,\bx} - \precalK_{\bx,\train}\precalK_{\train,\train}^{-1}\precalK_{\train,\bx}))$ for all $i\in[N]$, we can use these final outcomes of these computations as the $y_i$'s.

\subsection{Time Complexity Analysis}\label{sec:timecomplexityanalysis}
For time complexity, as we mentioned in \cref{sec:inferencealgorithm}, our posterior-predictive algorithm based on importance sampling does not induce significant overhead thanks to the reuse of the shared terms for calculation. More specifically, our algorithm with K sample variances spends $O(K+N^3)$ time, instead of $O(KN^3)$, for computing a posterior predictive, where N is the number of training points. Compare this with the usual time complexity $O(N^3)$ of the standard algorithm for Gaussian processes. When it comes to SVGP and SVTP, one update step of both SVGP and SVTP takes $O(BM^2+M^3)$ time, where B is the number of the batch size of the input dataset and M is the number of the inducing points.

\section{Classification with Gaussian Likelihood}
\label{sec:classification_with_gaussian_likelihood}

\begin{table}[t]
    \scriptsize
    \caption{Experimental results of Classification with Gaussian Likelihood}
    \label{tab:classification_with_gaussian_likelihood}
    \centering
    \begin{tabular}{lcccc}
        \toprule
        Dataset         & Accuracy       & NNGP             & Inverse Gamma             & Burr Type XII             \\
        \midrule
        MNIST           & $96.8\pm{0.2}$ & $9.29\pm{0.002}$ & $\textbf{4.89}\pm{0.001}$ & $        4.96 \pm{0.001}$ \\
        + Shot Noise    & $94.9\pm{0.4}$ & $9.32\pm{0.001}$ & $\textbf{4.88}\pm{0.001}$ & $        4.96 \pm{0.001}$ \\
        + Impulse Noise & $84.4\pm{2.3}$ & $9.63\pm{0.000}$ & $\textbf{5.17}\pm{0.001}$ & $        5.25 \pm{0.001}$ \\
        + Spatter       & $95.4\pm{0.4}$ & $9.27\pm{0.001}$ & $\textbf{4.44}\pm{0.002}$ & $        4.49 \pm{0.007}$ \\
        + Glass Blur    & $90.5\pm{0.7}$ & $9.12\pm{0.001}$ & $        4.20 \pm{0.005}$ & $\textbf{4.00}\pm{0.017}$ \\
        + Zigzag        & $84.9\pm{1.5}$ & $9.51\pm{0.001}$ & $        4.62 \pm{0.006}$ & $\textbf{4.49}\pm{0.021}$ \\
        \midrule
        EMNIST          & $70.4\pm{0.9}$ & $9.40\pm{0.001}$ & $\textbf{4.15}\pm{0.004}$ & $        4.45 \pm{0.013}$ \\
        Fashion MNIST   & $61.3\pm{5.1}$ & $9.34\pm{0.002}$ & $\textbf{4.96}\pm{0.002}$ & $        4.97 \pm{0.002}$ \\
        KMNIST          & $81.1\pm{1.3}$ & $9.48\pm{0.001}$ & $        4.60 \pm{0.002}$ & $\textbf{4.34}\pm{0.013}$ \\
        \midrule
        SVHN            & $42.5\pm{1.5}$ & $6.01\pm{0.062}$ & $        4.16 \pm{0.013}$ & $\textbf{4.15}\pm{0.009}$ \\
        \bottomrule
    \end{tabular}
\end{table}

We compare the \glspl{nngp} and the scale mixture of \glspl{nngp} with Inverse Gamma prior and Burr Type XII prior for the classification tasks. Following the convention~\citep{lee2018deep,novak2018bayesian,garriga2019deep}, we treat the classification problem as a regression problem where the regression targets are one-hot vectors of class labels and computed the posteriors induced from squared-loss. We searched hyperparameters refer to \citet{adlam2020exploring} and we choose both $c$ and $k$ from $[0.5, 1., 2., 3., 4.]$. We do not particularly expect the scale-mixtures of \glspl{nngp} to outperform \gls{nngp} in terms of predictive accuracy, but we expect them to excel in terms of calibration due to its flexibility in describing the variances of class probabilities. To better highlight this aspect, for MNIST data, we trained the models on clean training data but tested on corrupted data to see how the models would react under such distributional shifts. The results are summarized in \cref{tab:classification_with_gaussian_likelihood}. Due to the limitation of the resources for computing full data kernel matrix, we use 5000 samples as train set, and 1000 samples as test set. For all datasets, the scale-mixture of \glspl{nngp} outperform \gls{nngp} in terms of NLL.

\section{Experimental Details}
\label{sec:experimental_details}

In \cref{fig:sinx.pdf}, we used the inverse gamma prior with hyperparameter setting $\alpha=2$ and $\beta=2$ for the experiments validating convergence of initial distribution (\cref{thm:convergence-mixture-gps}) and last layer training (\cref{thm:lastlayertrain}). For the full layer training (\cref{thm:fcn-general-training}), we used $\alpha=1$ and $\beta=1$.

For the regression experiments, we divide each dataset into train/validation/test sets with the ratio $0.8/0.1/0.1$. 
We performed gradient descent in order to update parameters of our models except number of layers which is discrete value.
We referred \citet{adlam2020exploring} for initializing parameters of \glspl{nngp}.
We choose the best hyperparameters based on validation NLL values and measured NLL values on the test set with permuted train sets. 

For the classification experiments, we divide each dataset into train/test sets as provided by TensorFlow Datasets\footnote{\url{https://www.tensorflow.org/datasets}}, and further divide train sets into train/validation sets with the ratio $0.9/0.1$.
We choose the best hyperparameters based on the validation NLL values, and measure NLL and accuracy values on the test set with different initialization seeds. 
To measure the uncertainty calibration of the model, we used the corrupted variants of the dataset and made three new dataset from the base datasets. For the corrupted variants, we trained the model on the original version of the datasets, and tested on the provided corrupted verions. For the out-of-distribution variants, we removed 3 classses on the train set. For imbalance variants, we limited the samples per class with exponential ratio on the train set. For noisy label variants, we selected 50\% of labels and assigned uniformly selected new labels from the train set. Except the corrupted variants, we tested the model on the original test set.

For the classfication by categorical likelihood experiments, we use the CNN model with two layers to compute the kernel. Due to the limitations of computing resources, we can only use 400 inducing points for the variational inference which leads slight degradation on the performance. For the classification by finite model ensemble experiments, each CNN layer has 128 channels.


\section{Additional Experiments}
\label{sec:additional_experiments}

\begin{table}[t]
    \caption{RMSE values on UCI dataset. (m, d) denotes number of data points and features, respec- tively. We take results from Adlam et al. (2020) except our model.}
    \label{tab:table_regression_rmse}
    \centering
    \scriptsize
    \resizebox{\textwidth}{!}{
    \begin{tabular}{lrrrrrrr}
        \toprule
        Dataset             & ($    m$, $ d$) & PBP-MV                   & Dropout                  & Ensembles                & RBF                      & NNGP                     & Ours                     \\
        \midrule
        Boston Housing      & ($  506$, $13$) & $        3.11 \pm{0.15}$ & $\textbf{2.90}\pm{0.18}$ & $        3.28 \pm{1.00}$ & $        3.24 \pm{0.21}$ & $        3.07 \pm{0.24}$ & $        3.30 \pm{0.03}$ \\
        Concrete Strength   & ($ 1030$, $ 8$) & $        5.08 \pm{0.14}$ & $\textbf{4.82}\pm{0.16}$ & $        6.03 \pm{0.58}$ & $        5.63 \pm{0.24}$ & $        5.25 \pm{0.20}$ & $        5.08 \pm{0.14}$ \\
        Energy Efficiency   & ($  768$, $ 8$) & $        0.45 \pm{0.01}$ & $        0.54 \pm{0.06}$ & $        2.09 \pm{0.29}$ & $        0.50 \pm{0.01}$ & $        0.57 \pm{0.02}$ & $\textbf{0.44}\pm{0.03}$ \\
        Kin8nm              & ($ 8192$, $ 8$) & $\textbf{0.07}\pm{0.00}$ & $        0.08 \pm{0.00}$ & $        0.09 \pm{0.00}$ & $\textbf{0.07}\pm{0.00}$ & $\textbf{0.07}\pm{0.00}$ & $\textbf{0.07}\pm{0.00}$ \\
        Naval Propulsion    & ($11934$, $16$) & $\textbf{0.00}\pm{0.00}$ & $\textbf{0.00}\pm{0.00}$ & $\textbf{0.00}\pm{0.00}$ & $\textbf{0.00}\pm{0.00}$ & $\textbf{0.00}\pm{0.00}$ & $\textbf{0.00}\pm{0.00}$ \\
        Power Plant         & ($ 9568$, $ 4$) & $        3.91 \pm{0.04}$ & $        4.01 \pm{0.04}$ & $        4.11 \pm{0.17}$ & $        3.82 \pm{0.04}$ & $        3.61 \pm{0.04}$ & $\textbf{3.53}\pm{0.04}$ \\
        Wine Quality Red    & ($ 1588$, $11$) & $        0.64 \pm{0.01}$ & $        0.62 \pm{0.01}$ & $        0.64 \pm{0.04}$ & $        0.64 \pm{0.01}$ & $\textbf{0.57}\pm{0.01}$ & $        0.59 \pm{0.01}$ \\
        Yacht Hydrodynamics & ($  308$, $ 6$) & $        0.81 \pm{0.06}$ & $        0.67 \pm{0.05}$ & $        1.58 \pm{0.48}$ & $        0.60 \pm{0.07}$ & $        0.41 \pm{0.04}$ & $\textbf{0.35}\pm{0.04}$ \\
        \bottomrule
    \end{tabular}}
\end{table}

\begin{table}[t]
    \caption{Additional regression results for Burr Type XII prior distribution.}
    \label{tab:table_additional_regression}
    \centering
    \scriptsize
    \resizebox{0.8\textwidth}{!}{
    \begin{tabular}{lrrrrrrr}
        \toprule
        Dataset             & ($    m$, $ d$)  & NNGP                       & Inverse Gamma prior       & Burr Type XII prior       \\
        \midrule
        Boston Housing      & ($  506$, $13$)  & $\textbf{  2.65}\pm{0.13}$ & $         2.72 \pm{0.05}$ & $         2.77 \pm{0.02}$ \\
        Concrete Strength   & ($ 1030$, $ 8$)  & $          3.19 \pm{0.05}$ & $\textbf{ 3.13}\pm{0.04}$ & $         3.29 \pm{0.09}$ \\
        Energy Efficiency   & ($  768$, $ 8$)  & $          1.01 \pm{0.04}$ & $\textbf{ 0.67}\pm{0.04}$ & $         0.70 \pm{0.04}$ \\
        Kin8nm              & ($ 8192$, $ 8$)  & $         -1.15 \pm{0.01}$ & $        -1.18 \pm{0.01}$ & $\textbf{-1.23}\pm{0.01}$ \\
        Naval Propulsion    & ($11934$, $16$)  & $\textbf{-10.01}\pm{0.01}$ & $        -8.04 \pm{0.04}$ & $        -4.38 \pm{0.05}$ \\
        Power Plant         & ($ 9568$, $ 4$)  & $          2.77 \pm{0.02}$ & $\textbf{ 2.66}\pm{0.01}$ & $         2.78 \pm{0.01}$ \\
        Wine Quality Red    & ($ 1588$, $11$)  & $\textbf{ -0.98}\pm{0.06}$ & $        -0.77 \pm{0.07}$ & $        -0.16 \pm{0.05}$ \\
        Yacht Hydrodynamics & ($  308$, $ 6$)  & $          1.07 \pm{0.27}$ & $\textbf{ 0.17}\pm{0.25}$ & $         0.60 \pm{0.15}$ \\
        \bottomrule
    \end{tabular}}
\end{table}

\begin{figure}[t]
    \centering
    \includegraphics[width=\linewidth]{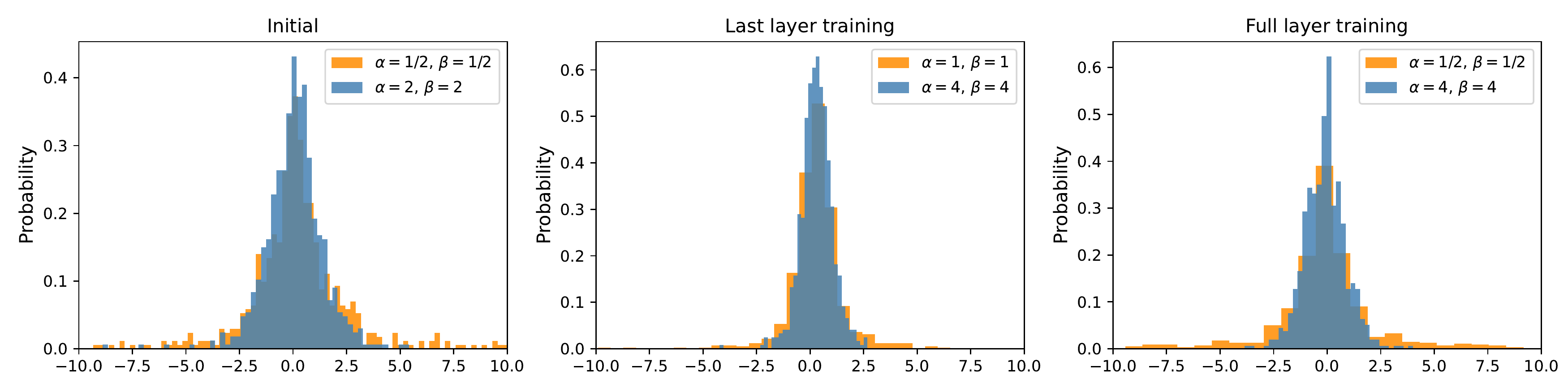}
    \caption{Impact of the prior hyperparameters for fully connected neural network initial, last layer training and full layer training.}
    \label{fig:correspond-2.pdf}
\end{figure}

\begin{figure}[t]
    \centering
    \includegraphics[width=0.75\textwidth]{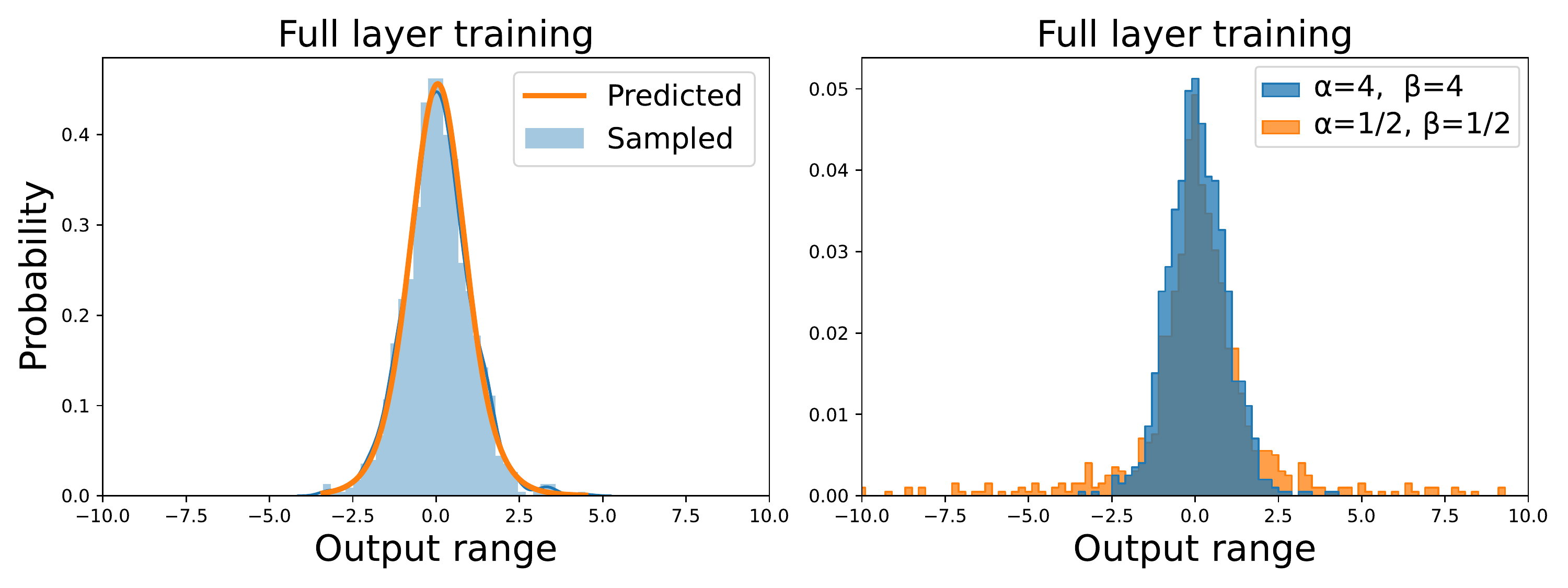}
    \caption{(left) Correspondence between wide finite ResNet model vs theoretical limit. (right) impact of the prior hyperparameters.}
    \label{fig:resnet.pdf}
\end{figure}

The experiment code is available at GitHub\footnote{\url{https://github.com/Anonymous0109/Scale-Mixtures-of-NNGP}}. We used a server with Intel Xeon Silver 4214R CPU and 128GB of RAM to evaluate the classification with Gaussian likelihood experiment, and used NVIDIA GeForce RTX 2080Ti GPU to conduct other experiments.

\paragraph{Impact of the prior hyperparameters}
With same experimental setting with \cref{subsec:exp_theory}, in order to inspect the impact of the prior hyperparameters, we sampled 1000 models for initial, last layer training and full layer training. In \cref{fig:correspond-2.pdf}, here we can empirically see that consistently with the theory, if $\alpha$ is smaller, the output distribution is more heavy-tailed.

\paragraph{Full-layer training correspondence for ResNet}
Even though we only proved the general training result (\cref{thm:fcn-general-training}) only for the fully-connected neural networks, we experimentally found that ResNet also shows a behavior predicted by our theorem. The empirical validation with for ResNet is shown in \cref{fig:resnet.pdf}. We set $\alpha=4$ and $\beta=4$ for the inverse gamma prior in this experiment.

\paragraph{Additional results of regression with Gaussian likelihood}
In addition to \cref{tab:table_regression}, we also measured the root mean square error values on the test set. \cref{tab:table_regression_rmse} summarizes the results. The results other than ours are borrowed from \citet{adlam2020exploring} and we use same settings as described in \cref{subsec:regression}.

\paragraph{Additional regression experiment}
As an additional regression experiments, we test the models which use Burr Type XII distribution as its last layer variance's prior. We use \cref{sec:inferencealgorithm} in order to calculate the predictive posterior distribution. Our results are in \cref{tab:table_additional_regression}.

\section{Additional Information}
\label{sec:additional_information}

We refer to the source of each dataset through footnotes with URL links. We don't use any data which obtained from people and don't use any data which contains personally identifiable information or offensive content.

\end{document}